\documentclass[letterpaper]{article}
\usepackage{arxivstyle} 

\usepackage[utf8]{inputenc}
\usepackage{amsmath,bm,amsfonts,amssymb,xfrac, amsthm}
\usepackage{graphicx}
\usepackage{xcolor}
\usepackage{xspace}
\usepackage{hyperref}
\usepackage{mathtools}
\usepackage[noend]{algpseudocode}
\usepackage{algorithm}
\usepackage{setspace}
\usepackage{enumitem}
\usepackage{longtable}

\usepackage{footnote}
\usepackage{footmisc}

\newcommand{\astfootnote}[1]{
\let\oldthefootnote=\thefootnote
\setcounter{footnote}{0}
\renewcommand{\thefootnote}{\fnsymbol{footnote}}
\footnote{#1}
\let\thefootnote=\oldthefootnote
}

\newcommand{\etal}{et al.\xspace}

\newcommand{\ie}{i.e.,\xspace}

\renewcommand{\etal}{et al.\xspace}
\renewcommand{\paragraph}[1]{\vspace{3mm}\noindent\textbf{#1}}

\newcommand{\shorten}[2]{#2}

\renewcommand{\b}[1]{{\bm{#1}}}   %

\newcommand{\1}{\b{1}}              %
\newcommand{\0}{\b{0}}              %
\newcommand{\x}{\b{x}}

\newcommand{\w}{\b{w}}
\newcommand{\y}{\b{y}}
\renewcommand{\u}{\b{u}}

\newcommand{\eps}{\bm{\varepsilon}} 
\renewcommand{\L}{\b{L}}            %
\newcommand{\U}{\b{U}}              %
\newcommand{\bSigma}{\b{\Sigma}}    %
\newcommand{\bLambda}{\b{\Lambda}}  %
\newcommand{\bOmega}{\b{\Omega}}  %
\newcommand{\bGamma}{\b{\Gamma}}  %

\renewcommand{\H}{{\b{H}}}
\newcommand{\I}{\b{I}}
\newcommand{\A}{\b{A}}
\newcommand{\B}{\b{B}}
\newcommand{\X}{\b{X}}

\newcommand{\W}{\b{W}}              %
\renewcommand{\O}{O}

\newcommand{\LG}{\L_{\hspace{-1px}G}}              %
\newcommand{\LJ}{\L_{\hspace{-1px}J}}              %
\newcommand{\LT}{\L_{\hspace{-1px}T}}              %

\newcommand{\UG}{\U_{\hspace{-1px}G}}              %
\newcommand{\UJ}{\U_{\hspace{-1px}J}}              %
\newcommand{\UT}{\U_{\hspace{-1px}T}}              %

\newcommand{\GFT}[1]{\textrm{GFT}\hspace{-.0mm}\{#1\}}

\newcommand{\DFT}[1]{\textrm{DFT}\hspace{-.0mm}\{#1\}}
\newcommand{\JFT}[1]{\textrm{JFT}\hspace{-.0mm}\{#1\}}

\newcommand{\Rbb}{\mathbb{R}}

\newcommand{\E}[1]{\mathbf{E}\hspace{-.3mm}\left[#1\right]}        %
\newcommand{\Es}[1]{\tilde{\mathbf{E}}\hspace{-.3mm}\left[#1\right]}        %
\newcommand{\var}[1]{\mathbf{Var}\hspace{-.3mm}\left[#1\right]}        %
\renewcommand{\vec}[1]{\textrm{vec}\hspace{-.5mm}\left(#1\right)}           %
\newcommand{\mat}[1]{\textrm{mat}\hspace{-.5mm}\left(#1\right)}           %
\newcommand{\diag}[1]{\textrm{diag}\hspace{-.5mm}\left(#1\right)}           %
\newcommand{\norm}[1]{\left\lVert#1\right\rVert}        %
\newcommand{\transpose}{\intercal}                      %
\newcommand{\hermitian}{*}                      %
\newcommand{\delequal}{\overset{\Delta}{=}} %

\newtheorem{theorem}{Theorem} \newtheorem{definition}{Definition}
\newtheorem{proposition}{Proposition} \newtheorem{lemma}{Lemma}
 \newtheorem{corollary}{Corollary}
\newtheorem{property}{Property}

\begin{document}

\title{Stationary time-vertex signal processing}

\author{
Andreas Loukas\textsuperscript{*1},
Nathana\"el  Perraudin\textsuperscript{*2}
\\ 
\textsuperscript{1}{Laboratoire de Traitement des Signaux 2, \'{E}cole Polytechnique F\'{e}d\'{e}rale Lausanne}\\
\textsuperscript{2}{Swiss Data Science Center, Eidgen\"ossische Technische Hochschule Z\"urich}\\
andreas.loukas@epfl.ch, 
nathanael.perraudin@sdsc.ethz.ch}
\fnsymbol{footnote}

\maketitle

\begin{abstract}
This paper considers regression tasks involving high-dimensional multivariate processes whose structure is dependent on some {known} graph topology. We put forth a new definition of time-vertex wide-sense stationarity, or \emph{joint stationarity} for short, that goes beyond product graphs. Joint stationarity helps by reducing the estimation variance and recovery complexity. In particular, for any jointly stationary process (a) one reliably learns the covariance structure from as little as a single realization of the process, and (b) solves MMSE recovery problems, such as interpolation and denoising, in computational time nearly linear on the number of edges and timesteps. Experiments with three datasets suggest that joint stationarity can yield accuracy improvements in the recovery of high-dimensional processes evolving over a graph, even when the latter is only approximately known, or the process is not strictly stationary.
\end{abstract}

\paragraph{Keywords:}
stationarity,
multivariate time-vertex processes,
harmonic analysis,
graph signal processing,
PSD estimation
{\color{white} \astfootnote{A. Loukas and N. Perraudin contributed equally to this work.}
}

\section{Introduction}

One of the main challenges when modeling multivariate processes is to decouple the estimation variance from the problem size. Consider an $N$-variate process unfolding over $T$ timesteps. If only mild assumptions are made then the number of realizations needed to reliably estimate the first two moments is up to a logarithmic factor proportional to $\O(NT)$, i.e., the data size~\cite{rudelson1998}.
Assuming that the process is time wide-sense stationarity (TWSS) makes the length $T$ of the process inconsequential. This is ideal for the univariate setting as it enables us to make relevant predictions even based on a single realization. 
If one additionally assumes that the signal autocorrelation is compactly supported, such that most data dependencies take place within a short time horizon, then the estimation variance can be reduced further by (roughly) splitting the observations into parts and considering each as an independent realization. This approach suffices when $N$ is relatively small.
For high-dimensional processes, however, one needs to incorporate additional assumptions to obtain meaningful predictions~\cite{lutkepohl2005new, ledoit2004well, lam2012factor, connor1995three}.

In this spirit, this paper focuses on high-dimensional processes that are supported on the vertex set and are statistically dependent on the edge set of some known graph topology.  
Whether examining epidemic spreading~\cite{keeling2005networks}, how traffic evolves in the roads of a city~\cite{mohan2008nericell}, or neuronal activation patterns present in the brain~\cite{huang2015graph}, many of the high-dimensional processes one encounters are inherently constrained by some underlying network. 
This realization has been the driving force behind recent efforts to re-invent classical models by taking into account the graph structure, with advances in many problems, such as denoising~\cite{zhang2008graph} and semi-supervised learning~\cite{smola2003kernels,belkin2004semi}, among others. 

Yet, standard models for processes (evolving) on graphs often fail to produce useful results when applied to real datasets. One of the main reasons for this shortcoming is that they model only a limited set of spatiotemporal behaviors. The well-used graph Tikhonov and total variation priors, for instance, assume that the signal varies slowly or in a piece-wise constant manner over edges, without specifying any precise relations~\cite{shuman2013emerging,sandryhaila2013discrete, sandryhaila2014big}. Similarly, assuming that the graph Laplacian encodes the conditional correlations of variables, as is done with Gaussian Markov Random Fields~\cite{gadde2015probabilistic}, becomes a rigid model when the graph is known~\cite{zhang2015graph}. To capture the behavior of complex networked systems, such as transportation and biological networks, it is crucial to train expressive models, being able to reproduce a wide range of graph and temporal behaviors.

\subsection{Contributions}

This paper considers the statistical modeling of processes evolving on graphs. 
In particular, we investigate the relationship between two different hypotheses: TWSS and VWSS~\cite{perraudin2016stationary,girault2015stationary,marques2016stationary}, which are individually helpful in reducing the variance of covariance estimation for time series and graph signals, respectively.
We propose a combined multivariate hypothesis that we refer to as time-vertex wide-sense stationarity, or \emph{joint stationarity} for short.
The necessary first step of our analysis consists of reformulating the standard properties of stationarity (such as the relation of the covariance matrix and power spectral density to an appropriate Fourier transform) from the lens of time-vertex analysis~\cite{isufi2017autoregressive,loukas2016frequency}.
This analysis is purposeful, yet not trivial as joint stationarity is more complicated than assuming stationarity on the product of two graphs\footnote{As it will be discussed in Section~\ref{subsec:relations}, joint stationarity is strictly more general than vertex stationarity on the product of the two graphs (first proposed in~\cite{sandryhaila2014big} in the deterministic setting), as the latter can only model processes with specific PSD.}.

We use the hypothesis of joint stationarity to control variance and computational complexity in estimation and recovery tasks.
Similar to~\cite{perraudin2016stationary}, also here one may reliably estimate the model parameters from few observations (e.g., see Figure~\ref{fig:covariance_estimation_comparison}) and solve MMSE recovery problems in time linear on the number of edges and timesteps (e.g., see Figure~\ref{fig:scalability}). 
Complimenting previous work, we also provide an analysis of the Power Spectral Density (PSD) estimation, which brings insight into the inherent trade-off between bias and variance.
In addition, we experimentally demonstrate that assuming joint stationarity aids in recovery even when only an approximation of the graph is known, or the process is only approximately jointly stationary. These experiments corroborate that the joint stationarity hypothesis is a useful assumption, particularly in situations when the problem features a large number of variables but only a limited number of observations.   

To test the utility of joint stationarity, we apply our methods on three diverse datasets: (a) a meteorological dataset containing the hourly temperature of 32 weather stations over one month in Molene, France~\cite{girault2015stationary}, (b) a traffic dataset depicting high-resolution daily vehicle flow of 4 weekdays in the highways of Sacramento, and (c) simulated SIRS-type epidemics over Europe. 
Our experiments confirm that for high-dimensional processes evolving over graphs, assuming joint stationarity yields an improvement in recovery performance as compared to time- or vertex-based stationarity methods, even when the graph is only approximately known and the data violate the strict conditions of our definition.

\subsection{Related work}

There exists an extensive literature on multivariate stationary processes, developing the original work of Wiener~\etal~\cite{wiener1957prediction,wiener1958prediction}. The reader may find interesting Bloomfield's book~\cite{bloomfield2004fourier} focusing on spectral relations. We focus on two main approaches that relate to our work, graphical models and signal processing on graphs.

\emph{Graphical models.} In the context of graphical models, multivariate stationarity has been used jointly with a graph in the work of~\cite{bach2004learning,dahlhaus2003causality}. Though relevant, we note that there is a key difference of these models with our approach: we assume that the graph is given, whereas in graphical models the graph structure (or more precisely the precision matrix) is learned from the data. 
Knowing the graph allows us to search for more involved relations between the variables. As such, we are not restricted to the case that the conditional dependencies are given by the graph (and therefore that they are sparse), but allow non-adjacent variables to be conditionally dependent,  modeling a broader set of behaviors. We also note that our approach is eventually more scalable. We refer to~\cite{zhang2015graph} for elements of connections between graphical models and graph signal processing. 

\emph{Graph signal processing.} The idea of studying the stationarity of a random vector w.r.t.  a graph was first introduced in~\cite{girault2015stationary,girault2015signal} and then in~\cite{perraudin2016stationary,marques2016stationary}. While these contributions have different starting points, they both roughly propose the same definition. Another more recent contribution relating to stationarity on graphs in the context of PSD estimation is~\cite{chepuri2016subsampling}. Despite the relevance of these works, it is important to stress that the current paper is the first to consider a stationary hypothesis over graph signals varying in time. Moreover, the new results are non-trivial as they cannot be obtained by applying previous definitions on a product graph. In addition, some of the analysis presented here (particularly that of Section~
\ref{sec:psd}) is novel and can also be employed for the previously studied case of stationary graph signals. To make the connection with previous works transparent, in the following every technical result (e.g., Lemma, Theorem, Proposition) that emerges as a generalization of~\cite{girault2015stationary,girault2015signal,marques2016stationary} contains a reference in its heading pointing to the former claim. 

The forecasting of time-evolving signals on graphs was also considered in~\cite{loukas2016predicting,mei2015signal,ioannidis2018inference,8081618}. Nonetheless, there are several differences with these works, with the most important being that we define joint stationarity, and that we are not restricted to the causal case (where a process is reconstructed only from its past).  
Finally, it should be noted that some preliminary results of this work appeared in a conference paper~\cite{perraudin2016towards}. This work extends the conference paper in many directions. We refine the definition of joint stationarity and explore how it relates to other well-known stationarity hypotheses. We propose a new PSD estimator and provide a theoretical analysis of the bias and variance of the old and new PSD estimators. Additionally, we study the complexity of the proposed solution and evaluate its merit w.r.t. two new datasets.

\section{Preliminaries}
\label{sec:preliminaries}

\paragraph{General notation.} We use boldface symbols for matrices and vectors (e.g., $\A$ and $\b{a}$ respectively) and calligraphic symbols for sets (e.g., $\mathcal{V}$ and $\mathcal{E}$). Symbol $j$ denotes the imaginary unit, $\I_N$ is the $N\times N$ identity matrix, and $\b{1}_N$ is the all-ones vector of size $N$. %
We use brackets to index matrix elements and subscripts for matrix blocks: if $\A$ is of size $N_1\times N_2$ then $\A[n_1,n_2]$ is the element at the $n_1$-th row and $n_2$-th column and $\A_{n_1,n_2}$ is a (block) matrix. Vector $\b{a} = \vec{\A}$ (without subscript) is the vectorized representation of $\A$ and $\b{a}_{n}$ is its $n$-th column. Moreover, $\A^\transpose$ is its transpose and $\A^\hermitian$ is its transposed complex conjugate (meaning that $({\A}^{\hermitian}_T)^\transpose$ is the complex conjugate). 
If $\A$ is $N\times N$ Hermitian, its eigen-decomposition is generically written as $\A = \U \bLambda \U^\hermitian$, where $\U = [\u_1, \ldots, \u_N$] is a matrix having eigenvectors as columns and $\bLambda = \diag{\lambda_1, \ldots, \lambda_N}$ is the diagonal matrix of eigenvalues.
Symbols $h(\cdot), f(\cdot), g(\cdot)$ are reserved for scalar/matrix functions. A matrix function with a single argument takes as an input a symmetric matrix $\A$ and outputs $h(\A) = \U \diag{h(\lambda_1), \ldots, h(\lambda_N)} \U^\top$. 
The operator $\otimes$ denotes the Kronecker product. The Kronecker sum $\oplus$ can be defined in terms of the Kronecker product as $\A \oplus \B = \A \otimes \I_{M} + \I_{N} \otimes \B$, where  matrix $\B$ has size $M \times M$.

\paragraph{Harmonic time-vertex analysis.} 
We consider signals supported on the vertices $\mathcal{V} = \{ v_1, v_2, \ldots, v_N \}$ of a weighted undirected graph $\mathcal{G} = (\mathcal{V}, \mathcal{E}, \W_G)$, with $\mathcal{E}$ the set of edges of cardinality $E = |\mathcal{E}|$ and $\W_G$ the weighted adjacency matrix. 
Suppose that signal $\x_t$ is sampled at $T$ successive regular intervals of unit length. A real time-vertex signal $\X = \left[ \x_1, \x_2, \ldots, \x_T \right] \in \mathbb{R}^{N\times T}$ is then the matrix having graph signal $\x_t$ as its $t$-th column.  

The frequency representation of a time-vertex signal $\X$ is given by the Joint Fourier Transform~\cite{loukas2016frequency,sandryhaila2014big} (or JFT for short) 
\begin{align}
    \hat{\X} = \JFT{\X} \delequal \GFT{\DFT{\X}} = \UG^* \X ({{\U}^{\hermitian}_T})^\transpose,
\end{align}
with $\UG$ and $\UT$ being, respectively, the unitary Graph Fourier Transform (GFT) and Discrete Fourier Transform (DFT) matrices, whereas $({\U}^{\hermitian}_T)^\transpose$ is the complex conjugate of $\UT$. In vector form, we have that $\hat{\x} = \JFT{\x} \delequal \UJ^\hermitian \, \x$, where $\UJ = \UT \otimes \UG$. %
As is often the case, we choose $\UG$ to be the eigenvector matrix of the combinatorial\footnote{Though we use the combinatorial Laplacian in our presentation, our results can be adapted to alternative positive semi-definite matrix definitions of a graph Laplacian, such as the normalized Laplacian.} graph Laplacian matrix $\LG = \diag{\W_G \1_N} - \W_G$, where $\1_N$ is the all-ones vector of size $N$, and $\diag{\W_G \1_N}$ is the diagonal degree matrix. 
Matrix $\UT$ is the eigenvector matrix of the Laplacian $\LT$ of a cyclic graph $\mathcal{T}$:
\begin{align}\label{eq:dftmtx}
    \UT^{\hermitian}[\tau,t]= \frac{e^{-j \omega_\tau t}}{\sqrt{T}}, \quad \text{with} \quad \omega_\tau = \frac{2 \pi (\tau-1)}{T} \quad \text{for} \quad t,\tau = 1, 2, \ldots, T.
\end{align}
With this in place, $\hat{\X}[n,\tau]$ can be seen as the Fourier coefficient associated with the joint frequency $[\lambda_n, \omega_\tau]$, where $\lambda_n$ denotes the $n$-th graph eigenvalue and $\omega_\tau$ the $\tau$-th angular frequency.

The JFT maintains a close connection with the product graph $\mathcal{J}$~\cite{sandryhaila2014big,loukas2016frequency}. The latter is the graph whose adjacency matrix is $\b{W}_J = \b{W}_T \oplus \b{W}_G$ (this amounts to a Cartesian product between $\mathcal{G}$ and the ring graph $\mathcal{T}$). %
The connection is revealed if one realizes that the Laplacian $\LJ = \LT \oplus \LG$ of $\mathcal{J}$ carries the eigen-decomposition $\LJ = \UJ (\bLambda_T \oplus \bLambda_G) \UJ$. It follows that computing the JFT (in vector form) is the same as computing the GFT of $\x$ w.r.t. graph $\mathcal{J}$. The main issue with any\footnote{The same limitation holds for product graph constructions that do not rely on the Cartesian product~\cite{sandryhaila2014big}.} product graph interpretation is that it imposes a strict dependence between the eigenvalues of $\LG$ and $\LT$ (since the eigenvalues of $\LJ$ are given by $\bLambda_T \oplus \bLambda_G$). As we will see in the next paragraph, to attain full generality one needs to abandon the product graph. For an in-depth discussion of JFT and its properties, we refer the reader to~\cite{grassi2017timevertex}.

\paragraph{Joint time-vertex filtering.} Filtering a time-vertex signal $\x$ with a \emph{joint filter} $h(\LG,\LT)$ corresponds to element-wise multiplication in the joint frequency domain $[\lambda, \omega]$ by a function $h: [0, \lambda_{\max}] \times [-1,\ 1] \mapsto \Rbb$ ~\cite{grassi2017timevertex,loukas2015distributed,isufi2016separable,loukas2016frequency}. When a joint filter $h(\LG,\LT)$ is applied to $\x$, the output is
\begin{align} \label{eq:def_joint_filtering}
    h(\LG,\LT) \, \x &= \UJ\, h(\bLambda_G,\bOmega) \, \UJ^\hermitian \x,
\end{align}
where $\bLambda_G \in \Rbb^{N\times N} $ and $\bOmega  \in \Rbb^{T\times T}$ are diagonal matrices with $\bLambda_G[n,n] = \lambda_n$ and $\bOmega[\tau,\tau] = \omega_\tau $, whereas $h(\bLambda_G,\bOmega)$ is a diagonal $NT\times NT$ matrix defined as  
\begin{align}
     h(\bLambda_G,\bOmega) = 
     \text{diag}
     \bigg(  
        \text{vec} \hspace{-0.1mm}\bigg( 
        \begin{bmatrix}
 h(\lambda_1, \omega_1) & \cdots & h(\lambda_1, \omega_T) \\
 \vdots &  \ddots & \vdots \\
 h(\lambda_N, \omega_1)  & \cdots & h(\lambda_N, \omega_T)
        \end{bmatrix}
        \bigg)
        \bigg)
    \notag 
\end{align}
and $\diag{\vec{\A}}$ creates a matrix with diagonal elements the vectorized form of $\A$. 
The bi-variate notation $h(\cdot, \cdot)$ is meant to illustrate that joint filters operate \textit{independently} on the two domains, something impossible\footnote{Defining joint filters in terms of a product graph would imply that there is a fixed relation between angular and graph frequencies (determined by the product graph construction). As a result, in the product graph framework filters are univariate functions.} in the product graph framework~\cite{sandryhaila2014big,loukas2016frequency}. 
For convenience, we will often overload notation and write $h(\theta_{n,\tau})$ to refer to the bivariate function $h(\lambda_n, \omega_\tau)$.  
Furthermore, we say that a joint filter is \emph{separable}, if its joint frequency response $h$ can be written as the product of a frequency response $h_1$ defined solely in the vertex domain and one $h_2$ in the time domain, i.e., $h(\theta) = h_1(\lambda) \cdot h_2(\omega)$.

\section{Joint Time-Vertex Stationarity}
\label{sec:stationarity}

Let $\X \in \Rbb^{N\times T}$ be a real discrete periodic multivariate stochastic process with a finite number of timesteps $T$ that is indexed by the vertex $v_i$ of graph $\mathcal{G}$ and time $t$. We refer to such processes as {time-vertex processes}, or \emph{joint processes} for short. 

Our objective is to provide a definition of stationarity that captures statistical invariance of the first two moments of a joint process $\x = \vec{\X} \sim \mathcal{D}(\bar{\x}, \bSigma)$, \ie the mean $\bar{\x} = \E{\x}$ and the covariance $\bSigma = \E{\x\x^\transpose} - \bar{\x}\bar{\x}^\transpose$. Crucially, the definition should do so in a manner that is faithful to the graph and temporal structure.  

\subsection{Definition}

Typically, wide-sense stationarity is thought of as an invariance of the two first moments of a process w.r.t.  translation. For the first moment, things are straightforward: stationarity implies a constant mean $\E{\x} = c \1$, independently of the domain of interest. The second moment, however, is more complicated as it depends on the exact form translation takes in the particular domain. Unfortunately, for graphs translation is a non-trivial operation and three alternative translation operators exist: the generalized translation~\cite{shuman2016vertex}, the graph shift~\cite{sandryhaila2013discrete}, and the isometric graph translation~\cite{girault2015signal}. Due to this challenge, there are currently three alternative (though akin) definitions of stationarity appropriate for graphs~\cite{perraudin2016stationary,girault2015stationary,marques2016stationary}. %

The ambiguity associated with translation on graphs urges us to seek an alternative starting point for our definition.  
Fortunately, there exists an interpretation which holds promise:
\textit{up to its constant mean, a wide-sense stationary process corresponds to a white process filtered linearly on the underlying space}. This ``filtering interpretation'' of stationarity is well known classically\footnote{As the correlation between two instants $t_1$ and $t_2$ depends only on the difference between these two instants $\E{\x[t_1]\x[t_2]}-\E{\x[t_1]}\E{\x[t_2]}=\b{\gamma}[t_1-t_2]$, the covariance matrix has to be circulant, a property that is shared by linear filters.} as well as in the graph setting~\cite{marques2016stationary} and is equivalent to asserting that the second moment can be expressed as $\bSigma = h(\LT)$, where $h(\LT)$ is a linear filter. Thankfully, not only filtering is elegantly and uniquely defined for graphs~\cite{shuman2016vertex}, but also stating that a process is graph wide-sense stationary iff $\E{\x} = c \1_N$  and $\bSigma = h(\LG)$ is a graph filter, is generally consistent\footnote{The only exception: for graphs with repeated eigenvalues, the conditions $\E{\x} = c \1$  and $\bSigma = h(\LG)$ are sufficient but not necessary for the graph stationarity definition based on isometric graph translation~\cite{girault2015stationary}.} with current definitions~\cite{perraudin2016stationary,girault2015stationary,marques2016stationary}.

This motivates us to also express the definition of stationarity for joint processes in terms of joint filtering: 

\begin{definition}[JWSS] \label{def:time-vertex-stationarity}
A joint process $\x = \vec{\X}$ is called Jointly Wide-Sense Stationary (JWSS), if and only if
\begin{enumerate}[label=(\alph*)]
  \setlength\itemsep{1mm}
    \item The first moment of the process is constant $\E{\x} = c\1_{NT}$. 
    \item The covariance matrix of the process is a joint filter $ \bSigma = h(\LG, \LT)$, where $h(\cdot, \cdot)$ is a non-negative real function referred to as joint power spectral density (JPSD).
\end{enumerate}
\end{definition}

Let us examine Definition~\ref{def:time-vertex-stationarity} in detail.

\vspace{2mm}\emph{First moment condition.} As in the classical case, the first moment of a JWSS process has to be constant over the time and the vertex sets, i.e., $\bar{\X}[{i,t}] = c $ for every $i = 1,2,\ldots,N$ and $t = 1,2,\ldots,T$. For alternative choices of the graph Laplacian with a null-space not spanned by the constant vector, the first moment condition should be modified to requiring that the expected value of a JWSS process is in the null space of the matrix $\LT \oplus \LG$ (see Remark 2~\cite{marques2016stationary} for a similar observation on stochastic graph signals).

\vspace{2mm}\emph{Second moment condition.} According to the definition, the covariance matrix of a JWSS process takes the form of a joint filter $h(\LG,\LT)$, and is therefore diagonalizable by the JFT matrix $\UJ$. 
It may also be interesting to notice that the matrix $h(\LG,\LT)$ can be expressed as follows
\begin{equation} \label{eq:joint_filter_expand}
    \bSigma = h(\LG,\LT) = \left(\begin{array}{cccc}
\H_{1,1}& \H_{1,2}& \cdots & \H_{1,T}\\
\H_{2,1} & \H_{2,2} & & \H_{2,T} \\
\vdots &  &  \ddots & \vdots \\
\H_{T,1} & \H_{1,2}  &  \cdots & \H_{T,T}
\end{array}\right),
\end{equation}
where each block $\H_{t_1,t_2}$ of $\bSigma$ is an $N\times N$ matrix defined as:
\begin{equation} \label{eq:joint_filter_detail}
\H_{t_1,t_2} = \frac{1}{T} \sum_{\tau = 1}^{T}  h_{\omega_\tau} (\LG) 
\, e^{j \omega_\tau (t_1-t_2+1)}
\end{equation}
and $h_{\omega_\tau} (\LG)$ is the graph filter with frequency response $h_{\omega_\tau} = h(\lambda,\omega_\tau)$. 
Being a covariance matrix, $h(\LG,\LT)$ must necessarily be positive-semidefinite; thus $h(\cdot,\cdot)$ is real (the eigenvalues of every Hermitian matrix are real) and non-negative. Also equivalently, every zero mean JWSS process $\x = \vec{\X}$ can be generated by joint filtering $\x = h(\LG,\LT)^{1/2} \b{\eps}$ a white process $\b{\eps}$ with zero mean and identity covariance.  
The following proposition exploits these facts to provide an interpretation of JWSS processes in the joint frequency domain.

\begin{proposition}[Generalizes Theorem 1~\cite{perraudin2016stationary} and Proposition 1 ~\cite{girault2015stationary,marques2016stationary}] \label{theorem:time-vertex-stationarity}
A joint process $\X$ over a connected graph $\mathcal{G}$ is Jointly Wide-Sense Stationary (JWSS) if and only if:
\begin{enumerate}[label=(\alph*)]
  \setlength\itemsep{1mm}
    \item The joint spectral modes are in expectation zero 
    $$\E{\hat{\X}[n,\tau]}=0 \quad \text{if } \lambda_{n} \neq 0 \text{ and } \omega_{\tau} \neq 0. $$
    \item The product graph spectral modes are uncorrelated 
    $$\E{ \hat{\X}[n_1,\tau_1] \hat{\X}[n_2,\tau_2]} = 0, $$
    whenever ${n_1} \neq {n_2} \text{ or } \tau_1 \neq {\tau_2}$.
    \item There exists a non-negative function $h(\cdot,\cdot)$, referred to as joint power spectral density (JPSD), such that
    $$ \E{\left|\hat{\X}[n,\tau]\right|^2} - \left|\E{\hat{\X}[n,\tau]}\right|^2 = h(\lambda_n, \omega_\tau),$$
    for every $n = 1, 2, \ldots, N$ and $\tau = 1, 2, \ldots, T$.
\end{enumerate}
\end{proposition}

(For clarity, this and other proofs of the paper have been moved to the appendix.)

We briefly present a few additional properties of JWSS processes that will be useful in the rest of the paper. 

\begin{property}[Generalizes Example 1~\cite{perraudin2016stationary,girault2015stationary,marques2016stationary}]
White centered i.i.d. noise $\w \in \Rbb^{NT} \sim \mathcal{D}(\0_{NT},\I_{NT})$ is JWSS with constant JPSD for any graph. 
\end{property}

The proof follows easily by noting that the covariance of $\w$ is diagonalized by the joint Fourier basis of any graph $\bSigma_\w = \I = \UJ \I \UJ^*$. This last equation tells us that the JPSD is constant, which implies that similar to the classical case, the energy of white noise is evenly spread across all joint frequencies.   

A second interesting property of JWSS processes is that stationarity is preserved through a filtering operation. 
\begin{property}[Generalizes Theorem 2~\cite{perraudin2016stationary}, Property 1~\cite{marques2016stationary}] \label{theo:time-vertex-psd-trans}
When a joint filter $f(\LG, \LT)$ is applied to a JWSS process $\X$ with JPSD $h$, the result $\b{Y}$ remains JWSS with mean $c f(0,0)\1_{NT}$, where $c$ is the mean of $\X$, and JPSD $f^2(\lambda, \omega) \, h(\lambda, \omega).
$\end{property}

Finally, we notice that for real processes $\X$, which are the focus of this paper, the function $h$ forming the joint filter should be symmetric w.r.t. $\omega$, meaning that $h(\lambda,\omega) = h(\lambda,-\omega)$. This property can be easily derived from the definition of the Fourier transform.

\subsection{Relations to classical definitions}
\label{subsec:relations}

We next provide an in-depth examination of the relations between joint wide-sense stationarity, time and vertex stationarity, as well as their multivariate equivalents. 
For clarity, we order the rows/columns of the covariance matrix $\bSigma$ such that each $\bSigma_{t_1, t_2}$ block of size $N\times N$ measures the covariance between $\x_{t_1}$ and $\x_{t_2}$ (see~\eqref{eq:joint_filter_expand}).

\begin{figure}[t]
\centering

\includegraphics[width=0.8\columnwidth]{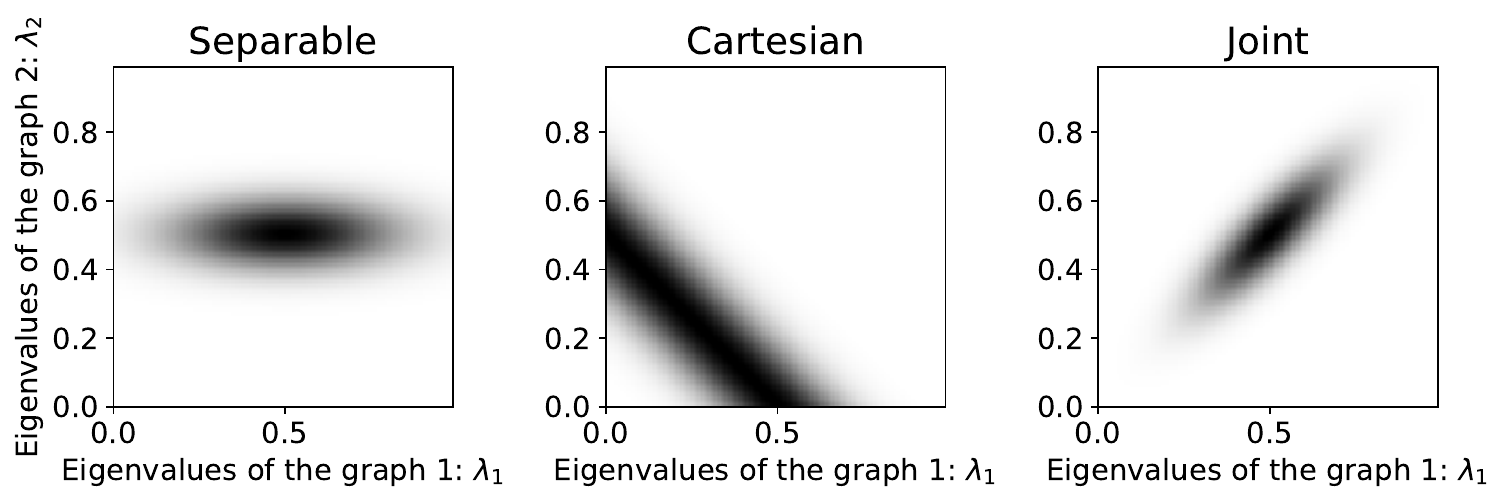}
\caption{The joint stationarity hypothesis is more general than assuming either (standard) VWSS and TWSS or VWSS on a (Cartesian) product graph.
The figure presents three examples of PSDs plotted as 2-dimensional function of $\lambda_1, \lambda_2$ that, for simplicity, corresponds to the eigenvalues of two graphs. The second graph (time) is a ring.
In the separable case (left), the PSD has to satisfy $h(\lambda_1,\lambda_2)=h_1(\lambda_1)h_2(\lambda_2)$, making it unable to capture any dependencies between $\lambda_1$ and $\lambda_2$. Using VWSS (middle) limits the PSD to $h(\lambda_1,\lambda_1)=h(\lambda_1+\lambda_2)$ leading to constant values along the diagonal line $\lambda_1+\lambda_2=c$. Joint stationarity (right) can encode any PSD $h(\lambda_1,\lambda_2)$, as exemplified here.}
\label{fig:comparizon_hypotheses}%
\end{figure}

\paragraph{Standard definitions}. As we discuss below, known definitions of stationarity in time/vertex domains are particular cases of joint stationarity.

\emph{TWSS $\cap$ VWSS $\subset$ JWSS.} The known versions of stationarity (TWSS, VWSS) are oblivious to any structure along one of the two dimensions of $\X$. In this manner, assuming that $\X$ is TWSS amounts to interpreting each of the $N$ time series as a separate realization of the \textit{same} process with TPSD $h_T(\omega)$. Similarly, if $\X$ is VWSS then each graph signal $\x_t$ is taken as a separate realization of a \textit{single} stochastic graph signal with VPSD $h_G(\lambda)$~\cite{perraudin2016stationary,marques2016stationary}.  It is a simple consequence that, different from the JWSS hypothesis, assuming that $\X$ is both TWSS and VWSS is equivalent to limiting our scope to separable JPSD defined as the product of two univariate functions $h(\lambda,\omega) = h_G(\lambda) h_T(\omega)$---see also Figure~\ref{fig:comparizon_hypotheses}.  

\paragraph{Definitions based on the product graph.}
As explained in Section~\ref{sec:preliminaries}, the JFT can be interpreted as a graph Fourier transform taken over a product graph whose Laplacian is $\LJ = \LG \oplus \LT$. This construction can give rise to two additional definitions for joint stationarity:

\emph{VWSS on a product graph.} The first is obtained by applying the VWSS definition of~\cite{perraudin2016stationary,marques2016stationary} on the graph associated with $\LJ$.
The resulting model is {not} sufficiently general in order to generate the full spectrum of JWSS processes. The reason is that, whereas the JPSD $h(\lambda, \omega)$ can be any two-dimensional non-negative function, the JPSD of any VWSS process on $\LJ$ is necessarily one-dimensional (the eigenvalues of $\LJ$ are the sums of all combinations of the eigenvalues of $\LG$ and $\LT$)---see Figure~\ref{fig:comparizon_hypotheses} for a pictorial demonstration and Appendix~\ref{app:univariate-vs-multivariate-PSD} for examples from real data. The same reasoning also holds for alternative products between graphs, such as the strong and Kronecker products~\cite{sandryhaila2014big}.

\emph{Covariance diagonalized by the product graph Fourier transform.} The second definition, which we refer to as JWSS-alternate, entails asserting that the covariance matrix $\bSigma$ can be diagonalized by the JFT, i.e., the eigenbasis of $\LJ$. This can be seen to differ from the JWSS definition only in case of graph Laplacian eigenvalue multiplicities: whenever the graph Laplacian features repeated eigenvalues, for Definition~\ref{def:time-vertex-stationarity} the degrees of freedom of the JPSD $h$ decrease, as necessarily $h(\lambda_1, \omega) = h(\lambda_2, \omega)$ when $\lambda_1 = \lambda_2$.  This restriction is motivated by the following observation: for an eigenspace with multiplicity greater than one, there exists an infinite number of possible eigenvectors corresponding to the different rotations in the space, and the JPSD is in general ill-defined. The condition $h(\lambda_1, \omega) = h(\lambda_2, \omega)$ when $\lambda_1 = \lambda_2$ deals with this ambiguity, as it ensures that the JPSD is the same independently of the choice of eigenvectors.
On the contrary, with JWSS-alternate one should construct an arbitrary basis of each eigenspace with multiplicity and set\footnote{More generally, in an analogy to~\cite{girault2015stationary} the JPSD could be block diagonal with each block being of size equal to the multiplicity.} $h(\lambda_1, \omega) \neq h(\lambda_2, \omega)$. This approach, which was followed in~\cite{segarra2018statistical}, features more degrees of freedom at the expense of the loss of filtering interpretation and higher computational complexity: one may not anymore use filters to estimate the JPSD (without reverting to Definition~\ref{def:time-vertex-stationarity}), whereas using the JFT to diagonalize the covariance scales like $\O(N^3 + N^2 T + N T \log(T))$. On the contrary, in our setting the PSD estimation complexity can be reduced to be close to linear in the number of edges $E$ and timesteps $T$ (see Appendix~\ref{app:fastpsd}). 
Nevertheless, we should mention that from a pragmatic perspective, the differences mentioned above are mostly academic. Eigenvalue multiplicities occur mainly when graph automorphisms exist. In the absence of such symmetries (e.g., in the graphs used in our experiments), the two definitions yield the same outcome.

\paragraph{Multivariate definitions.} On the other hand, joint stationarity can itself be derived as the combination of two multivariate versions of time/vertex stationarity, which we refer to respectively as MTWSS (see~\cite{bach2004learning}) and MVWSS. Before formally defining them in Definitions~\ref{def:time-stationarity} and~\ref{def:vertex-stationarity}, let us state our result formally: 

\begin{theorem} \label{theo:time-vertex-def}
    A joint process $\X$ is JWSS if and only if it is MTWSS and MVWSS.
\end{theorem}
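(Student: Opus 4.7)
The plan is to prove the two implications separately, leveraging the structural decompositions of the covariance already laid out in the discussion preceding the theorem. The forward direction is essentially a summary: Definition~\ref{def:time-vertex-stationarity} together with the block expansion in \eqref{eq:joint_filter_expand}--\eqref{eq:joint_filter_detail} shows that $\H_{t_1,t_2}$ depends only on the time difference $t_1-t_2$, which is exactly the block-circulant condition of MTWSS, while each such block is manifestly a polynomial in $\LG$, which is exactly the MVWSS condition. The mean conditions of MTWSS and MVWSS are immediate specializations of $\E{\x}=c\1_{NT}$.

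For the converse, assume $\X$ is simultaneously MTWSS and MVWSS. First, the MTWSS mean condition $\E{\x_t}=c\1$ (with the \emph{same} constant $c$ for every $t$) collapses to $\E{\x}=c\1_{NT}$, matching Definition~\ref{def:time-vertex-stationarity}(a). Second, combining the two second-moment conditions, $\bSigma$ is block-circulant with blocks $\bGamma_\delta=\gamma_\delta(\LG)$. The goal is then to show that any covariance of this form is a joint filter in $(\LG,\LT)$.

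The key step is a two-stage diagonalization. I would invoke the standard fact that any block-circulant matrix with $T$ blocks of size $N$ is block-diagonalized by $\UT\otimes\I_N$; applied to $\bSigma$ this yields block-diagonal entries $\hat{\bGamma}_\tau = \frac{1}{\sqrt T}\sum_{\delta}\gamma_\delta(\LG)\,e^{-j\omega_\tau(\delta-1)}$. Because graph filters are closed under linear combinations of polynomials of $\LG$, each $\hat{\bGamma}_\tau$ is itself a graph filter, say $h_{\omega_\tau}(\LG)$, and hence diagonalized by $\UG$. A further conjugation by $\I_T\otimes\UG$ therefore produces a purely diagonal matrix whose $(n,\tau)$ diagonal entry equals $h_{\omega_\tau}(\lambda_n)$. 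Since $(\UT\otimes\I_N)(\I_T\otimes\UG)=\UT\otimes\UG=\UJ$, we conclude $\bSigma=\UJ\,\diag{[h_{\omega_\tau}(\lambda_n)]_{n,\tau}}\,\UJ^\hermitian$, identifying $\bSigma$ with the joint filter $h(\LG,\LT)$ of JPSD $h(\lambda_n,\omega_\tau)=h_{\omega_\tau}(\lambda_n)$. Non-negativity and realness of $h$ follow directly from positive semi-definiteness and Hermiticity of a covariance matrix.

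The main obstacle I expect is purely bookkeeping: reconciling the ordering convention $\x=\vec{\X}$ (which places the graph index fastest) with the Kronecker factorization $\UJ=\UT\otimes\UG$, and ensuring the block-circulant diagonalization formula is stated with the right sign/conjugation in $\omega_\tau$. A minor technical nuisance, which the paper already foreshadows in the discussion after Theorem~\ref{theorem:time-vertex-stationarity}, is that under repeated graph eigenvalues the assignment $\lambda_n\mapsto h_{\omega_\tau}(\lambda_n)$ must be shown to be well defined; this is handled by noting that $\hat{\bGamma}_\tau$ is a polynomial in $\LG$, so its action on any eigenspace of $\LG$ is determined by the associated eigenvalue alone.
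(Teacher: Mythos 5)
Your proof is correct and takes essentially the same route as the paper's: both directions hinge on the block formula \eqref{eq:joint_filter_detail}, which identifies the blocks of a joint filter as graph filters $\gamma_\delta(\LG)$ depending only on the time lag, and the converse recovers the JPSD as the temporal DFT of the kernels $\gamma_\delta$. Your explicit two-stage Kronecker diagonalization via $\UT\otimes\I_N$ and $\I_T\otimes\UG$, and your remark on repeated eigenvalues, merely unpack steps the paper leaves implicit.
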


To put this in context, we examine the two multivariate definitions independently.

\vspace{2mm}
(a) \emph{JWSS $\subset$ MTWSS.} The covariance matrix of a JWSS process has a block circulant structure, as $\bSigma_{t_1,t_2} =  \bSigma_{\delta,1} = \bGamma_\delta$, where $\delta = t_1-t_2 + 1$. Hence $\bSigma$ can be written as
\[
\bSigma_{\b{x}} = \left(\begin{array}{cccc}
\bGamma_{1}& \bGamma_{2} & \cdots & \bGamma_{T}\\
\bGamma_{T} & \bGamma_{1} & & \bGamma_{T-1} \\
\vdots &  &  \ddots & \vdots \\
\bGamma_{2} & \bGamma_{3}   &  \cdots & \bGamma_{1}
\end{array}\right),
\]
implying that correlations only depend on $\delta$ and not on any time localization. This property is shared by multivariate time wide-sense stationary processes:
\begin{definition}[MTWSS~\cite{bach2004learning}] \label{def:time-stationarity}
 A joint process $\X= \left[ \x_1, \x_2, \ldots, \x_T \right] \in \mathbb{R}^{N\times T}$ is Multivariate Time Wide-Sense Stationary (MTWSS), if and only if the following two properties hold:
\begin{enumerate}[label=(\alph*)]
    \item The expected value is constant as $\E{\x_t} = c\1$ for all $t$. 
    \item For all $t_1,t_2$ the second moment satisfies
    $
    \bSigma_{t_1,t_2} =  \bSigma_{\delta,1} = \bGamma_{\delta},  
    $
    where $\delta = t_1-t_2+1$.
\end{enumerate}
\end{definition}
Similarly to the univariate case, the Time Power Spectral Density (TPSD) is defined to encode the statistics of the process in the spectral domain
\begin{equation}
\hat{\bGamma}_\tau = \sum_{\delta=1}^{T}  \bGamma_{\delta} e^{-j\omega_\tau \delta}. 
\end{equation} 
We then obtain the TPSD of a JWSS process by constructing a graph filter from $h$ while fixing $\omega$. Setting $h_{\omega_\tau}(\lambda) = h(\lambda,\omega_\tau)$, the TPSD of a JWSS process is 
\shorten{\begin{equation}
    \hat{\bGamma}_\tau = h_{\omega_\tau}(\LG).
\end{equation}}
{$\hat{\bGamma}_\tau = h_{\omega_\tau}(\LG).$}

\vspace{2mm}
(b) \emph{JWSS $\subset$ MVWSS.}
For a JWSS process, each block  of $\bSigma$ has to be a linear graph filter, i.e., $\bSigma_{t_1,t_2}= \gamma_{t_1,t_2}(\LG)$, meaning that
\[
\bSigma = \left(\begin{array}{cccc}
\gamma_{1,1}(\LG)& \gamma_{1,2}(\LG) & \cdots & \gamma_{1,T}(\LG)\\
\gamma_{2,1}(\LG) & \gamma_{2,2}(\LG) & &  \\
\vdots &  &  \ddots & \vdots \\
\gamma_{T,1}(\LG) &   &  \cdots & \gamma_{T,T}(\LG)
\end{array}\right).
\]
This is perhaps better understood when compared to the multivariate version of vertex stationarity defined below: 
\begin{definition}[MVWSS] \label{def:vertex-stationarity}
A joint process $\X = [\x_1, \x_2, \ldots, \x_T] \in \mathbb{R}^{N\times T}$ is called Multivariate Vertex Wide-Sense Stationary
(MVWSS), if and only if the following two properties hold independently:
\begin{enumerate}[label=(\alph*)]
    \item The expected value is of each signal $\x_{t}$ is constant $\E{\x_{t}} = c_t\1$ for all $t.$
    \item For all $t_1$ and $t_2$, there exist a kernel $\gamma_{t_1,t_2}$ such that $ \bSigma_{t_1,t_2} =  \gamma_{t_1,t_2}(\LG) $.
\end{enumerate}
\end{definition}
It can be seen that every JWSS process must also be MVWSS, or equivalently, JWSS $\subset$ MVWSS.

\section{Joint Power Spectral Density estimation}
\label{sec:psd}

The joint stationarity assumption can be useful in overcoming the challenges associated with dimensionality. The main reason is that for JWSS processes the estimation variance is decoupled from the problem size. 
Concretely, suppose that we want to estimate the covariance matrix $\bSigma$ of a joint process $\x = \vec{\X}$ from $K$ samples $\x_{(1)}, \x_{(2)}, \ldots, \x_{(K)}$. As we show in the following, if the process is JWSS such that $\bSigma = h(\LG, \LT)$, the JPSD estimation variance is $O(1)$. This is a sharp decrease from the classical and MTWSS settings, for which $K \approx NT$ and $K \approx N$ realizations are necessary\footnote{The number of realizations needed for obtaining a good sample covariance matrix of an $n$-dimensional process is $O(n \log{n})$~\cite{rudelson1998,vershynin2012}.}, respectively. 

This section presents two JPSD estimators. The first provides unbiased estimates at a complexity that is $O(N^3 T \log(T))$. The second estimator decreases further the estimation variance at the cost of a bounded bias and is approximated with (close to) linear complexity. %

\subsection{Sample JPSD estimator} 

We define the \emph{sample JPSD estimator} for every graph frequency $\lambda_n$ and angular frequency $\omega_\tau$ as the estimate
\begin{align}
	\dot{h}(\lambda_n, \omega_\tau) \delequal \sum_{k = 1}^K \frac{\left| \JFT{\X_{(k)} }[n,\tau]\right|^2}{K}.
	\label{eq:PSD_sample}
\end{align}
In case the process does not have zero mean, it should be centered by subtracting the constant signal $c \, \1_N \1_T^\hermitian$, where 
$ c = \sum_{k,i,t} \X_{(k)}{[i,t]} / (KNT).$ In that case, the unbiased estimator should involve division by $K-1$, instead of $K$ as we have in~\eqref{eq:PSD_sample}.

\paragraph{Analysis.} For simplicity, in the following we suppose that the process is correctly centered. 
As the Theorem~\ref{theo:sample_JPSD} claims, the sample JPSD estimator is unbiased, and its variance decreases linearly with the number of samples $K$. 
\begin{theorem}
	For every distribution with bounded second and fourth order moments, the sample JPSD estimator $\dot{h}(\theta)$ 
	\begin{enumerate}[label=(\alph*)]
	\setlength{\itemindent}{-0mm}
		\item is unbiased, i.e., $\E{\dot{h}(\theta)} = h(\theta)$, and
		\item has variance $\var{ \dot{h}(\theta)} = h^2(\theta)\, \dfrac{ \gamma - 1}{K}$,
	\end{enumerate}	 
	where constant $\gamma$ depends only on the distribution of $\x$. %
	\label{theo:sample_JPSD}
\end{theorem}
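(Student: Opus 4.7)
The plan is to leverage the frequency-domain characterization of JWSS processes established in Theorem~\ref{theorem:time-vertex-stationarity} and, in particular, the representation of a zero-mean JWSS process as the output of a joint filter driven by white noise.

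\textbf{Unbiasedness.} For part~(a), I would apply linearity of expectation directly to the definition
\[
\E{\dot{h}(\theta)} \;=\; \frac{1}{K}\sum_{k=1}^{K} \E{\big|\JFT{\X_{(k)}}[n,\tau]\big|^{2}}
\]
and then invoke item~(c) of Theorem~\ref{theorem:time-vertex-stationarity}, which states that for a centered JWSS process $\E{|\hat{\X}[n,\tau]|^{2}} = h(\lambda_n,\omega_\tau) = h(\theta)$. Assuming the mean has been correctly subtracted as described just after~\eqref{eq:PSD_sample}, each expectation equals $h(\theta)$ and the $K$-term average is unbiased.

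\textbf{Variance.} For part~(b), using that the samples are i.i.d., the variance decomposes as
\[
\var{\dot{h}(\theta)} \;=\; \frac{1}{K}\,\var{\big|\hat{\X}[n,\tau]\big|^{2}} \;=\; \frac{1}{K}\Big(\E{|\hat{\X}[n,\tau]|^{4}} - h^{2}(\theta)\Big).
\]
To control the fourth moment, I would invoke the ``filtering'' viewpoint noted immediately after~\eqref{eq:joint_filter_detail}: any zero-mean JWSS process admits the representation $\x = h(\LG,\LT)^{1/2}\eps$ with $\eps$ a white process of unit covariance. Taking the JFT of both sides, using that it diagonalizes $h(\LG,\LT)^{1/2}$, gives the key pointwise identity $\hat{\X}[n,\tau] = \sqrt{h(\theta)}\,\hat{\b{\eps}}[n,\tau]$, and hence
\[
\E{|\hat{\X}[n,\tau]|^{4}} \;=\; h^{2}(\theta)\,\E{|\hat{\b{\eps}}[n,\tau]|^{4}}.
\]
Setting $\gamma \delequal \E{|\hat{\b{\eps}}[n,\tau]|^{4}}$ yields the claimed form $\var{\dot{h}(\theta)} = h^{2}(\theta)(\gamma-1)/K$. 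The hypothesis that $\x$ has bounded fourth moment ensures that $\gamma$ is finite.

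\textbf{Main obstacle.} The delicate point is arguing that $\gamma$ depends \emph{only} on the distribution of $\x$, not on $\theta$. In general, $\hat{\b{\eps}}[n,\tau]$ is a unitary linear combination of the i.i.d. entries of $\b{\eps}$, whose fourth moment takes the form $3 + (\kappa-3)\sum_{j} a_{j,(n,\tau)}^{4}$, where $\kappa$ is the kurtosis of a single entry of $\b{\eps}$ and the $a_{j,(n,\tau)}$ are JFT coefficients. In the Gaussian case $\kappa=3$ and $\gamma=3$ is a true constant; more generally one would interpret $\gamma$ either as a $\theta$-dependent quantity bounded uniformly by a distributional constant, or invoke an asymptotic regime (large $NT$) where $\sum_j a_{j,(n,\tau)}^{4} \to 0$ and the JFT coefficients concentrate around their Gaussian surrogate. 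In the final proof I would make this point explicit, remarking that the variance bound is tight in the Gaussian case and otherwise holds with $\gamma$ replaced by its supremum over the joint frequencies, which still depends only on the underlying noise distribution.
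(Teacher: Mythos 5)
Your proof follows essentially the same route as the paper's: both write $\x = h(\LG,\LT)^{1/2}\eps$ with $\eps$ white, reduce the coefficient to $\hat{\X}[n,\tau]=\sqrt{h(\theta)}\,\hat{\varepsilon}$, use independence across the $K$ realizations, and set $\gamma=\E{|\hat{\varepsilon}|^4}$. The subtlety you flag at the end is real --- for non-Gaussian $\eps$ the fourth moment of the unitary combination $\hat{\varepsilon}$ does depend on $\theta$, and the paper's proof silently elides this --- so your explicit caveat (exactness in the Gaussian case, a uniform bound otherwise) is if anything more careful than the published argument.
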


\begin{proof}
For any $\theta = [\lambda, \omega]$,
the sample estimate is 
\begin{align}
    \dot{h}(\theta) = {h(\theta)} \sum_{k = 1}^K \frac{\hat{\varepsilon}_{(k)} \hat{\varepsilon}_{(k)}^\hermitian }{K},
    \label{eq:sample_estimate}
\end{align}
with $\hat{\varepsilon}_{(k)}$ being independent realizations of $\hat{\varepsilon}$, a zero mean complex random variable with unit variance. To see this, write $\x = h(\LG, \LT)^{1/2} \eps$, where the random vector $\eps$ has zero mean and identity covariance. Then, the complex random variable $\hat{\varepsilon}$ is the JFT coefficient of $\eps$ corresponding to frequencies $\lambda$ and $\omega$.
The bias follows by noting that $\E{\hat{\eps}_{(k)} \hat{\eps}_{(k)}^\hermitian} = 1$, for every $k$. The variance is computed similarly by exploiting the fact that different terms in the sum are independent as they correspond to distinct realizations and setting $\gamma = \E{|\hat{\varepsilon}|^4}$. 
\end{proof}

For the standard case of a Gaussian joint process, we provide an exact characterization of the distribution.
\begin{corollary}
	For every Gaussian JWSS process, the sample JPSD estimate follows a Gamma distribution with shape $ K/2$ and scale $2h(\theta)/K$. The estimation error variance is equal to $\var{ \dot{h}(\theta)} = 2\, h^2(\theta)/K$.
	\label{cor:sample_estimator_gaussian}
\end{corollary}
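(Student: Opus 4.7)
The plan is to build directly on the decomposition established in the proof of Theorem~2. Recall that
$$\dot{h}(\theta) = h(\theta)\sum_{k=1}^K \frac{\hat{\varepsilon}_{(k)} \hat{\varepsilon}_{(k)}^\hermitian}{K},$$
so the entire distributional question reduces to identifying the law of $\sum_{k=1}^K |\hat{\varepsilon}_{(k)}|^2$ under the Gaussian assumption. First I would observe that, since $\x = h(\LG,\LT)^{1/2}\,\eps$ with $\eps$ a zero-mean unit-covariance vector, the hypothesis that $\x$ is Gaussian is equivalent to $\eps \sim \mathcal{N}(\0_{NT},\I_{NT})$. The JFT is a unitary linear map, so the frequency coefficient $\hat{\varepsilon}$ inherits Gaussianity with unit variance.

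Next I would argue that $|\hat{\varepsilon}_{(k)}|^2$ is distributed as $\chi^2_1$: writing $\hat{\varepsilon}$ in terms of a real Gaussian with unit variance (or, at conjugate-symmetric frequencies, grouping real and imaginary parts so their sum of squares has one effective degree of freedom) gives a single squared standard normal. Since the $K$ realizations $\x_{(1)},\ldots,\x_{(K)}$ are independent, the corresponding $|\hat{\varepsilon}_{(k)}|^2$ are i.i.d., and hence
$$\sum_{k=1}^K |\hat{\varepsilon}_{(k)}|^2 \ \sim\ \chi^2_K \ =\ \mathrm{Gamma}\!\left(\tfrac{K}{2},\,2\right).$$

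I would then push this through the deterministic scaling by $h(\theta)/K$. Gamma distributions are closed under positive scalar multiplication: if $Z \sim \mathrm{Gamma}(\alpha,\beta)$ then $cZ \sim \mathrm{Gamma}(\alpha,c\beta)$. Applied here with $c = h(\theta)/K$, $\alpha = K/2$, $\beta = 2$, this yields immediately
$$\dot{h}(\theta) \ \sim\ \mathrm{Gamma}\!\left(\tfrac{K}{2},\,\tfrac{2h(\theta)}{K}\right),$$
which is exactly the claimed law. The variance statement then follows by plugging into the Gamma-variance formula $\alpha \beta^2 = (K/2)(2h(\theta)/K)^2 = 2 h^2(\theta)/K$; equivalently, it can be read off Theorem~2(b) by noting that $\gamma = \E{|\hat\varepsilon|^4} = 3$ for a standard Gaussian, so $\gamma - 1 = 2$.

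The only genuinely delicate point is step two: justifying that $|\hat{\varepsilon}|^2$ carries one chi-squared degree of freedom rather than two. This must be handled carefully because for a generic real Gaussian vector the DFT produces complex coefficients that come in conjugate-symmetric pairs, and the natural book-keeping at a complex frequency would yield $\chi^2_2/2$ (Gamma$(1,1)$) rather than $\chi^2_1$. I would resolve this by invoking the real-signal symmetry $h(\lambda,\omega) = h(\lambda,-\omega)$ already noted after Theorem~1, so that each independent real degree of freedom of $\eps$ contributes exactly one $\chi^2_1$ term per sample to $\dot{h}(\theta)$. Once this is pinned down, the rest of the corollary is mechanical.
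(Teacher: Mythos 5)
Your argument is essentially the paper's own proof: identify $\hat{\varepsilon}$ as a centered unit-variance Gaussian, note that $|\hat{\varepsilon}|^2$ is $\chi^2_1$, recognize the estimator as a scaled sum of $K$ i.i.d.\ chi-squared variables, and read off the Gamma parameters and variance. Your closing remark on whether a complex-valued frequency coefficient carries one or two chi-squared degrees of freedom is a genuine subtlety that the paper's one-line proof passes over silently, so flagging it is a useful addition rather than a deviation in method.
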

\begin{proof}
We continue in the context of the proof of Theorem~\ref{theo:sample_JPSD}. For a Gaussian distribution, $\hat{\varepsilon}$ is centered and scaled Gaussian and thus $\hat{\varepsilon}^2$ is a chi-squared random variable with 1 degree of freedom. Our estimate is, therefore, a scaled sum of i.i.d. chi-squared variables and corresponds to a Gamma distribution. The corollary then follows directly.
\end{proof}

Observe that the variance depends linearly on the fourth-order moment of $|\hat{\varepsilon}|$ (see proof of Theorem~\ref{theo:sample_JPSD}) and is inversely proportional to the number of samples, but it is independent of $N$ and $T$. This implies that $\norm{\bSigma - \dot{\bSigma}}_2$ can be made arbitrarily small using $K = O(1)$ samples. In the following, we discuss how to achieve an even smaller variance by exploiting the properties of $h(\theta)$. 

\subsection{Convolutional JPSD estimator}

When the number of available realizations $K$ is small (even 1), one may make use of additional assumptions on to obtain reasonable estimates. To this end, we next present a parametric JPSD estimator that allows us to trade off bias for variance.

Before delving into JWSS processes, it is helpful to consider the purely temporal case.
For a TWSS process, it is customary to assume that the autocorrelation function has support $L$ that is a few times smaller than $T$. Then, cutting the signal into $\frac{T}{L}$ smaller parts and computing the average estimate, reduces the variance (by a factor of $\frac{T}{L}$), without sacrificing frequency resolution. 
This basic idea stems from two established methods used to estimate the PSD of a temporal signal, namely Bartlett's and Welch's method~\cite{bartlett1950periodogram,welch1967use}. 
Averaging across different windows is equivalent to smoothing the TPSD by convolving it with a window in the frequency domain: this results in attenuation of the correlation for long delays, enforcing localization in the time domain.

\paragraph{Estimator.} Armed with this interpretation, we proceed by smoothing the JPSD with a user-specified bi-variate window $g$, such as a Gaussian or a disc window. The convolutional JPSD estimator computes the JPSD at joint frequency $\theta = (\lambda, \omega)$ as:
\begin{align}
	 \ddot{h}(\theta) 
	& \delequal \frac{1}{ c_g(\theta)} \sum_{\substack{n=1\\\tau = 1}}^{N, T} g(\theta - \theta_{n,\tau})^2 \, \sum_{k = 1}^K \frac{\left| \JFT{\X_{(k)} }[n,\tau]\right|^2}{K},  \label{eq:JPSD-est}
\end{align}
where $c_g(\theta) \delequal \sum_{n,\tau} g(\theta - \theta_{n,\tau})^2$ is a normalization factor. %
For implementation specifics, including a discussion on the choice of the bivariate kernel $g$, we refer the reader to Appendix~\ref{app:fastpsd}.

The convolutional JPSD estimator is related to known PSD estimators for TWSS and VWSS processes. Denote by $\phi$ the Dirac function. We have that: (a) For $g(\theta) = \phi(\lambda) \cdot g_T(\omega)$, we recover the classical TPSD estimator, applied independently for each $\lambda$. (b) For $g(\theta) = g_G(\lambda) \cdot \phi(\omega)$, we recover the VPSD estimator from~\cite{perraudin2016stationary} applied independently for each $\omega$. 
Similar to the latter, the estimator can be closely approximated at a complexity that is linear w.r.t. the number of graph edges/nodes, and up to a logarithmic factor linear to the number of timesteps (see Appendix~\ref{app:fastpsd}).    

\paragraph{Analysis.} To provide a meaningful bias analysis, we introduce a Lipschitz continuity assumption on the JPSD, matching the intuition that localized phenomena tend to have a smooth representation in the frequency domain.   

\begin{theorem}\label{theo:convolutional_bias_variance}
At $\theta,$ the convolutional JPSD estimator $\ddot{h}(\theta)$  
\begin{enumerate}[label=(\alph*)]
\setlength{\itemindent}{0mm}
	\item has bias 
	\begin{align*}\hspace{-6mm}
\left|\E{\ddot{h}(\theta) - h(\theta)} \right|
\leq \frac{\epsilon }{ c_g(\theta) }\hspace{-1mm} \sum_{n=1,\tau=1}^{T,N}\hspace{-3mm}g(\theta - \theta_{n,\tau})^{2} \norm{\theta - \theta_{n,\tau}}_{2},
\end{align*}
	where $\epsilon$ is the Lipschitz constant of $h(\theta)$, and
	\item when the entries of $\hat{\X}$ are independent random variables, its variance is 
	\begin{align*}
		\var{\ddot{h}(\theta)} = \sum_{n,\tau} \frac{g(\theta- \theta_{n,\tau})^{4}}{c_g(\theta)^2} \, \var{\dot{h}(\theta_{n,\tau})},
	\end{align*}
	where $\var{\dot{h}(\theta_{n,\tau})}$ is the variance of the sample JPSD estimator at $\theta_{n,\tau}$.
\end{enumerate}
\end{theorem}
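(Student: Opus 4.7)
The plan is to compute bias and variance directly from the definition of $\ddot{h}(\theta)$, leveraging what we already know about $\dot{h}$ from Theorem~\ref{theo:sample_JPSD}.

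For the bias, I would first pass expectation through the linear combination in \eqref{eq:JPSD-est}. Since $\dot{h}(\theta_{n,\tau})$ is unbiased (part (a) of Theorem~\ref{theo:sample_JPSD}), this yields
\[
\E{\ddot{h}(\theta)} = \frac{1}{c_g(\theta)}\sum_{n,\tau} g(\theta - \theta_{n,\tau})^2 \, h(\theta_{n,\tau}).
\]
The key bookkeeping trick is that, by the very definition of $c_g(\theta)$, the scalar $h(\theta)$ can be written as $h(\theta) = \frac{1}{c_g(\theta)}\sum_{n,\tau} g(\theta - \theta_{n,\tau})^2 \, h(\theta)$. Subtracting the two expressions and applying the triangle inequality followed by the Lipschitz assumption $|h(\theta_{n,\tau}) - h(\theta)| \leq \epsilon \norm{\theta - \theta_{n,\tau}}_2$ gives the stated bound immediately. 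The non-negativity of $g(\cdot)^2$ is what lets us move the absolute value inside the sum without any loss.

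For the variance, the main point to exploit is the assumption that the entries of $\hat{\X}$ are independent. Observe that $\dot{h}(\theta_{n,\tau}) = \frac{1}{K}\sum_{k} |\hat{\X}_{(k)}[n,\tau]|^2$ is a measurable function of only the $(n,\tau)$-th entries across samples; hence the family $\{\dot{h}(\theta_{n,\tau})\}_{n,\tau}$ consists of mutually independent random variables. Since $\ddot{h}(\theta)$ is a deterministic linear combination of these, the variance of a sum of independent random variables equals the sum of variances, and pulling out the squared weights $(g(\theta - \theta_{n,\tau})^2/c_g(\theta))^2$ yields the formula in part (b).

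I do not expect any serious obstacle here: both parts amount to standard manipulations once unbiasedness of $\dot{h}$ and independence of the $\hat{\X}$ entries are invoked. The only spot that merits a brief justification is the claim that independence of the entries of $\hat{\X}$ transfers to independence of the $\dot{h}(\theta_{n,\tau})$'s across $(n,\tau)$; this follows because each $\dot{h}(\theta_{n,\tau})$ is a function of a disjoint set of entries (namely $\{\hat{\X}_{(k)}[n,\tau]\}_{k=1}^K$), and functions of disjoint independent collections are independent.
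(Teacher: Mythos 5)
Your proposal is correct and follows essentially the same route as the paper: the bias bound comes from writing $h(\theta)$ as the same normalized weighted combination (the paper phrases this as adding and subtracting a term $A\,h(\theta)$ with $A=0$) and then applying the triangle inequality with the Lipschitz assumption, while the variance formula comes from observing that independence of the entries of $\hat{\X}$ makes the summands independent so variances add with squared weights. The only cosmetic difference is that the paper first centers and rescales to variables $z_{n,\tau}$ before summing variances, whereas you work directly with the $\dot{h}(\theta_{n,\tau})$; the content is identical.
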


The derivations of the bias and variance are given in Lemmas~\ref{lemma:convolutional_bias} and~\ref{lemma:convolutional_variance}, respectively.
\vspace{2mm}

We note two corner cases of interest. In the most convenient case, the JPSD is constant, and our estimator is unbiased (the Lipschitz constant $\epsilon$ is zero). On the other hand, if the JPSD fluctuates rapidly, the bias of the estimate will be significant unless $g$ is close to a Dirac. Here, the sample estimator should be preferred.  

We further consider as a theoretical example the case of a Gaussian JWSS process and a (spectral) disc window with bandwidth B, i.e., $g_{B}(\theta) = 1$ if $\norm{\theta}_2 \leq\frac{B}{2}$ and 0, otherwise. Though perhaps not the most practical choice from a computational perspective, we consider here a disc window because it leads to simple and intuitive estimates.
\begin{corollary}
For every $\epsilon$-Lipschitz Gaussian JWSS process and disc window $g_B(\theta)$, the convolutional estimate has
\begin{align}
	\hspace{-3mm}\left|\E{\ddot{h}(\theta) - h(\theta)} \right| \leq \frac{\epsilon B}{2} \quad \text{and} \quad\var{\ddot{h}(\theta)} 
	&= \frac{2 \,{h^2_\mathcal{S}}}{K |\mathcal{S_\theta}|},
\end{align}
with set $\mathcal{S}_\theta = \{ \theta_{n,\tau} \, | \, \norm{\theta_{n,\tau} - \theta}_2 \leq B/2 \}$ and ${h^2_\mathcal{S}} = \sum_{ \theta_{n,\tau} \in \mathcal{S} } h(\theta_{n,\tau})^2$.
\end{corollary}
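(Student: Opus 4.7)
The plan is to specialize Theorem~\ref{theo:convolutional_bias_variance} to the disc window $g_B$ and combine it with the Gaussian sample variance from Corollary~\ref{cor:sample_estimator_gaussian}. The two parts are essentially direct substitutions, so I will just keep track of which terms survive and simplify the normalization.

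For the bias, I start from the general bound
$$
\bigl|\E{\ddot{h}(\theta) - h(\theta)}\bigr|
\leq \frac{\epsilon}{c_g(\theta)} \sum_{n,\tau} g_B(\theta - \theta_{n,\tau})^2 \, \norm{\theta - \theta_{n,\tau}}_2.
$$
Since $g_B(\theta - \theta_{n,\tau})^2$ is the indicator of $\{\norm{\theta - \theta_{n,\tau}}_2 \leq B/2\}$, the sum reduces to a sum over $\theta_{n,\tau} \in \mathcal{S}$, and the normalization collapses to $c_g(\theta) = |\mathcal{S}|$. Inside $\mathcal{S}$ each distance is at most $B/2$, so each term in the numerator is bounded by $B/2$, and dividing by $|\mathcal{S}|$ yields the claimed bound $\epsilon B / 2$.

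For the variance, I use the independence of the JFT coefficients (which for a Gaussian JWSS process follows because $\UJ^\hermitian$ diagonalizes the covariance, making $\hat{\X}[n,\tau]$ uncorrelated and hence, being jointly Gaussian, independent) so Theorem~\ref{theo:convolutional_bias_variance}(b) applies. With the disc window, $g_B^4 = g_B^2$, so
$$
\var{\ddot{h}(\theta)} = \frac{1}{|\mathcal{S}|^2} \sum_{\theta_{n,\tau} \in \mathcal{S}} \var{\dot{h}(\theta_{n,\tau})}.
$$
By Corollary~\ref{cor:sample_estimator_gaussian}, $\var{\dot{h}(\theta_{n,\tau})} = 2\, h^2(\theta_{n,\tau}) / K$. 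Substituting and pulling the factor $1/K$ out gives
$$
\var{\ddot{h}(\theta)} = \frac{2}{K\,|\mathcal{S}|^2} \sum_{\theta_{n,\tau}\in\mathcal{S}} h^2(\theta_{n,\tau}) = \frac{2\, h^2_\mathcal{S}}{K\,|\mathcal{S}|},
$$
by the definition of $h^2_\mathcal{S}$.

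There is no real obstacle here beyond bookkeeping; the only subtlety worth flagging is the independence hypothesis in Theorem~\ref{theo:convolutional_bias_variance}(b). One should briefly note that the Gaussian assumption together with the uncorrelatedness of the joint spectral modes (Theorem~\ref{theorem:time-vertex-stationarity}(b)) supplies the required independence of the $\hat{\X}[n,\tau]$, so that Theorem~\ref{theo:convolutional_bias_variance}(b) may indeed be invoked.
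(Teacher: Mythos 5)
Your proposal is correct and follows essentially the same route as the paper's own proof: specialize Theorem~\ref{theo:convolutional_bias_variance} to the disc window (noting $c_g(\theta)=|\mathcal{S}|$ and that $g_B^2=g_B^4$ is the indicator of $\mathcal{S}$), plug in the Gaussian sample variance $2h^2(\theta_{n,\tau})/K$ from Corollary~\ref{cor:sample_estimator_gaussian}, and justify the independence hypothesis by the fact that $\hat{\x}=\UJ^\hermitian\x$ is Gaussian with diagonal covariance. No gaps; the bookkeeping is right.
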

\begin{proof}
The results follow from Theorem~\ref{theo:convolutional_bias_variance} and Corollary~\ref{cor:sample_estimator_gaussian} by noting that when a disc window is used: (a) $c_g(\theta) = |\mathcal{S_\theta}|$, and (b) $g(\theta- \theta_{n,\tau})^{2} = 1$ for all $n,\tau$ in the window (there are $|\mathcal{S_\theta}|$ in total) and zero otherwise. The independence condition required by the variance clause of the theorem is satisfied since $\hat{\x}$ is Gaussian (as a rotation $\hat{\x} = \UJ^\hermitian \x$ of a Gaussian vector) with diagonal covariance. 
\end{proof}
The above result suggests that by selecting our window (bandwidth), we can trade off bias for variance. The trade-off is particularly beneficial as long as (a) the JPSD is smooth relatively to the disc size ($\epsilon B \ll 1$) and (b) the graph eigenvalues are clustered ($|S_\theta|\gg 1$ when $h(\theta)\gg 0$).

\section{Recovery of JWSS Processes}
\label{sec:recovery}

This section considers the MMSE problem of recovering a JWSS process $\x = \vec{\X}$ from linear measurements $\y$ corrupted by a zero-mean JWSS process $\w$: 
\begin{align}\label{prob:recovery_problem}
\begin{aligned}
& \min_{f: \Rbb^{N'} \rightarrow \Rbb^N}
& & \E{\| f(\y) - \x\|_2^2}  \\
& \text{subject to}
& &  \y = \A \x + \w, 
\end{aligned}
\tag{P0}
\end{align}
where the function $f$ is linear on $\y$, i.e., there exists a matrix $\b{W}$ and a vector $\b{b}$ such that $f(\y) = \b{W} \y+ \b{b}$.
We remark that (a) for $\A$ binary diagonal and $\w = \0$, \eqref{prob:recovery_problem} is an \emph{interpolation} problem, (b) for $\A = \I$ and $\w$ white noise (P0) is a \emph{denoising} problem, and (c) for $\A$ diagonal with $\A_{ii} = 1$ if $i \leq N t$ and zero otherwise and $\w = \0$ it corresponds to \emph{forecasting}. We mainly consider the former two problems since, for forecasting, it is more computationally efficient to utilize autoregressive models~\cite{loukas2016predicting}.

The \emph{minimum mean-squared linear estimate} is known to be
\begin{eqnarray}
\label{eq:linear_mmse}
\dot{\x} &= \bSigma_{\x\y} \bSigma_{\b{y}}^{-1} (\y - \bar{\y}) + \bar{\x},
\end{eqnarray}
with the definitions $\bSigma_{\y} = \A \bSigma \A^\hermitian + \bSigma_\w$ and $\bSigma_{\x\y}  = \bSigma \A^\hermitian $. Obtaining $\dot{\x}$ therefore entails solving a linear system in matrix $\bSigma_{\y}$, that -naively approached- has $O( N^2 T^2)$ complexity. In addition, the condition number of $\bSigma_{\y}$ can be large, rendering direct inversion unstable. For instance, this may happen when one attempts to reverse any smoothing operation $\A$ that severely attenuates part of the signal's spectrum.   

\vspace{3mm} \noindent We next discuss how to deal with these issues: 

\paragraph{Decreasing the complexity.} Thankfully, even if $\bSigma_\y$ is not always sparse, we can approximate its multiplication by a vector without actually computing it as (a) $\A$ is, for many applications (denoising, prediction, forecasting), sparse, and (b) per our assumption $\bSigma$ and $\bSigma_\w$ are joint filters and therefore they can be implemented at complexity that is (up to logarithmic factors) linear to the number of edges $E$ and timesteps $T$~\cite{isufi2017autoregressive,isufi2016separable,grassi2017timevertex}. Therefore, if we employ an iterative method such as the (preconditioned) conjugate gradient to compute the solution, the complexity of each iteration will be linear on the problem size. 

\paragraph{Singular or badly conditioned $\bSigma_{\y}$.} We choose the solution with the minimal residual by substituting the inverse $\bSigma_{\b{y}}^{-1}$ in~\eqref{eq:linear_mmse} with the pseudo-inverse $\bSigma_{\b{y}}^{+}$.
However, instead of solving the normal equations 
$
\dot{\x} = \bSigma_{\x\y} (\bSigma_{\b{y}}^2)^{-1} \bSigma_{\b{y}} (\y - \bar{\y})+ \bar{\x},    
$
which has the effect of significantly increasing the condition number of our matrix, we suggest to employ the minimal residual conjugate gradient method for symmetric matrices~\cite{axelsson1980conjugate}. %
For badly conditioned covariance matrices, an alternative solution is to rewrite the problem as a regularized least squares problem
\begin{align}\label{prob:Wiener-opt}
\min_{\b{z} \in \Rbb^N} \|\A\b{z}- \y\|_2^2 + \|{h_\w(\LG,\LT)}^{\sfrac{1}{2}} \, h_\x(\LG,\LT)^{-\sfrac{1}{2}} (\b{z} -\bar{\x}) \|_2^2
\end{align}
and solve it using the generalization of the fast iterative shrinkage-thresholding algorithm (FISTA) scheme~\cite{combettes2005signal,combettes2011proximal,komodakis2015playing}. This problem was shown to converge to the correct solution when $\w$ is white noise. 
More details about the optimization procedures can be found in~\cite{perraudin2016stationary}.
Similarly, in the noiseless case one removes term $\|\A\b{z}- \y\|_2^2$ in \eqref{prob:Wiener-opt} and introduces instead the constraint $\A \b{z}= \y$. The resulting optimization problem can be solved  
using a Douglas-Rachford scheme~\cite{combettes2007douglas}.

\begin{figure}[t!]
\centering
\includegraphics[width=0.24\columnwidth]{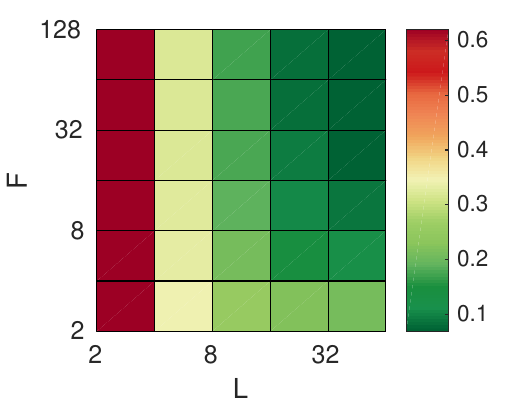}%
\includegraphics[width=0.24\columnwidth]{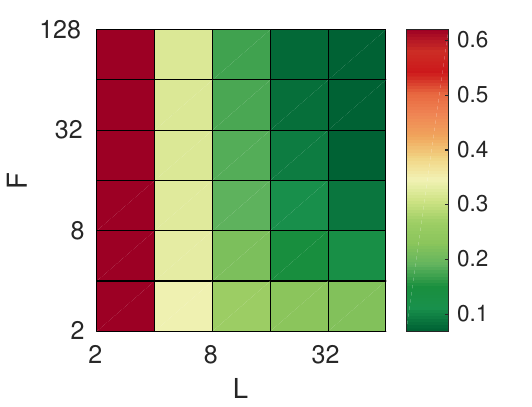}%
\includegraphics[width=0.24\columnwidth]{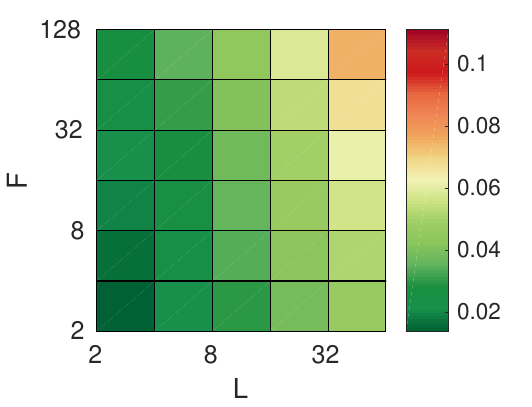}%
\includegraphics[width=0.24\columnwidth]{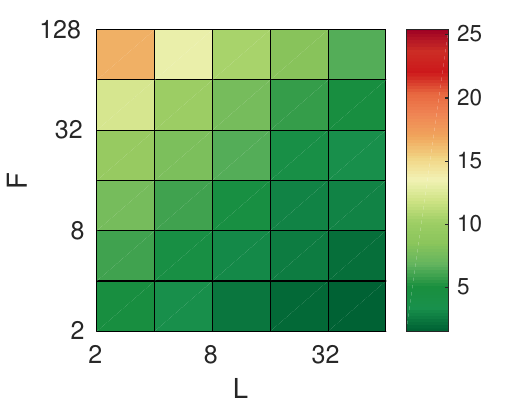}\\
\hspace{0.1cm}\text{\scriptsize{(a) estimation error}} 
\hspace{2.4cm}\text{\scriptsize{(b) bias}}
\hspace{2.5cm}\text{\scriptsize{(c) standard deviation}}%
\hspace{1.5cm}\text{\scriptsize{(d) execution time (sec)}}\\
\vspace{2mm}
\caption{Influence of the parameters (window size $L$ and number of graph filters $F$) on the (a) estimation error, (b) bias, (c) normalized standard deviation, and (d) execution time. For improved visibility, the scale of (c) has been changed. \label{fig:JPSD_accuracy}}%
\end{figure}
\section{Experiments}
\label{sec:experiments}
\subsection{Joint Power Spectral Density Estimation}

The first step in our evaluation is to analyze the efficiency of JPSD estimation. Our objective is dual. First, we aim to study the role of the different method parameters into the estimation accuracy and computational complexity, essentially providing practical guidelines for their usage. In addition, we wish to illustrate the usefulness of the joint stationarity assumption, even when the graph is only approximately known. 
 
\paragraph{Variance-bias-complexity tradeoffs.} To validate the analysis of Section~\ref{sec:psd} for the computational and accuracy trade-offs inherent to our JPSD estimation method, we performed numerical experiments with random geometric graphs of $N = 256$ vertices (we build a 10-nearest neighbor graph, weighted by a radial basis function kernel tuned so that the average weighted degree is slightly above $7$) and JWSS processes ($T = 128$ timesteps).
Though our approach works with any JPSD, including high frequency ones, in this experiment we consider a stochastic process generated by the discrete damped wave equation with a non-separable JPSD $h(\lambda,\omega) = \exp(-|\omega|/2) \cos(\omega \, \text{acos}(1-\lambda))$

\emph{Variance-bias.} First, we examine the relation between the real JPSD $h$ and the convolutional estimate $\ddot{h}$ obtained using the `fast' method described in Appendix~\ref{app:fastpsd}. We use the following metrics: 
\begin{align*}
\scalebox{0.95}{%
\begin{tabular}{c | c | c} 
\textit{error} & \textit{bias} & \textit{standard deviation} \\[0.3em]
$\dfrac{\Es{\norm{\ddot{\H} - \H}_F} }{ \norm{\H}_F} $ & $\dfrac{\norm{\Es{\ddot{\H}} - {\H}}_F}{\norm{\H}_F}$ & $\dfrac{\Es{\norm{ \ddot{\H} - \Es{\ddot{\H}}}_F}}{\norm{\H}_F}$,
\end{tabular}
}
\end{align*}
where $\H = h(\bLambda_G, \bOmega)$, $\ddot{\H} = \ddot{h}(\bLambda_G, \bOmega)$ and $\Es{\cdot}$ is the empirical expectation computed over $20$ independent experiments.
We remind the reader that there are two parameters influencing the performance of the convolutional JPSD estimator (see Appendix~\ref{app:fastpsd}): the window size $L$ corresponding to our assumption for the support length of the autocorrelation in time, and the number of graph filters $F$ used to capture power density in the graph spectral dimension. As discussed in Theorem~\ref{theo:convolutional_bias_variance}, the bias will be small as long as the JPSD is a smooth function (it has a small Lipschitz constant $\epsilon$), in which case one may opt for small $L$ and $F$. Figures~\ref{fig:JPSD_accuracy} (a-d) report four key metrics for an exhaustive search of $L,F$ combinations. We observe that large values of $F$ and $L$ generally reduce the estimation error (Figure~\ref{fig:JPSD_accuracy} (a)) because they result in reduced bias (Figure~\ref{fig:JPSD_accuracy} (b)). Nevertheless,   
setting the parameters to their maximum values is not suggested as the variance is increased (Figure~\ref{fig:JPSD_accuracy} (c)).

\begin{figure}
\centering
\includegraphics[width=0.75\columnwidth]{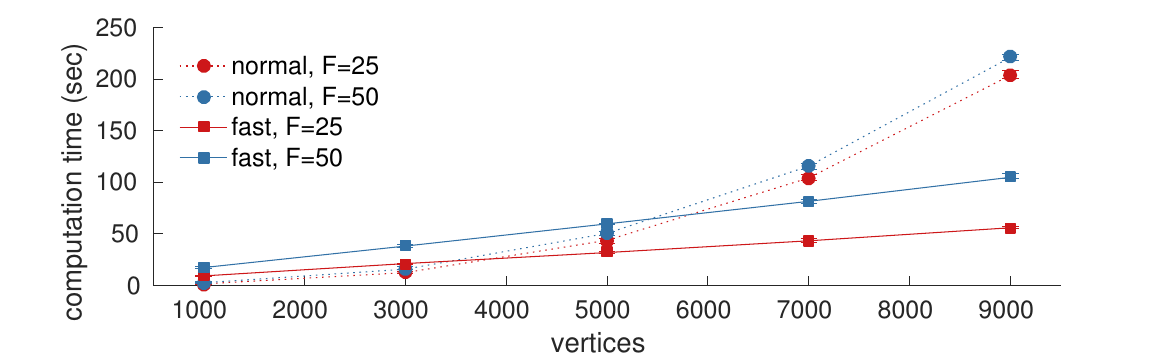}
\caption{Scalability of the convolutional JPSD estimator in seconds (vertical axis) w.r.t. the number of vertices (horizontal axis). The fast implementation should be favored when the graph is composed of more than a few thousand vertices. The approximation error of the fast implementation was negligible in our experiments.}%
\label{fig:scalability}
\end{figure}

\emph{Complexity.} In Figure~\ref{fig:JPSD_accuracy} (d) we see that utilizing a large number of filters (i.e., large $F$) increases the average execution time. Figure~\ref{fig:scalability} delves further into the issue of scalability. In particular, we vary the number of vertices from 1000 to 9000 and focus on a process with JPSD $h(\theta) = e^{-\lambda/\lambda_\textit{max}} \, e^{-5\, \omega^2}$. We then examine the min/median/max execution time of the convolutional JPSD estimator for a for increasing problem sizes when ran in a desktop computer and repeated 10 times. We compare two implementations. The first, which naively performs the convolution in the spectral domain, uses the eigenvalue decomposition and therefore scales quadratically with the number of vertices. Due to its optimized code and simplicity, this should be the method of choice when $N$ is small. For larger problems, we suggest using the fast implementation. As shown in the figure, this scales linearly with $N$ (here $E = \O(N)$) when the number of filters $F$ and timesteps $T$ are held constant. In this experiment, we set $L$ to 64. %

\emph{How to choose $L$ and $F$?} Having no computational constrains, one should choose the parameter combination that minimized the Akaike information criterion (AIC) score $\text{AIC} = 2FL - 2\ln(\ddot{\ell})$, where $\ddot{\ell}$ is the distribution dependent estimated likelihood $\ddot{\ell} = \mathbf{P}(\x | \ddot{\bSigma})$, and $\ddot{\bSigma}$ is the estimated covariance based on the convolutional JPSD estimator with parameters $L$ and $F$~\cite{akaike1974new}. This procedure is often unfeasible as it is based on computing each model's log-likelihood and thus entails estimating one JPSD for each parameterization in consideration (as well as knowing the distribution type). We have found experimentally that setting $F=\text{min}(N, 50)$ provides a good trade-off between computational complexity and error. On the other hand, we suggest setting $L$ to an upper bound of the autocorrelation support.

\paragraph{Learning from few realizations and a noisy graph.} Figure~\ref{fig:covariance_estimation_comparison}  illustrates the benefit of a joint stationarity prior as compared to (a) an empirical covariance estimator which makes no assumptions about the data, and (b) the MTWSS process estimator with optimal bandwidth~\cite{wiener1957prediction}. As expected, accurate estimation is challenging when the number of realizations is much smaller than the number of problem variables ($N T$), returning errors above one for the empirical estimator. Introducing stationarity priors regularizes the estimation resulting in more stable estimates. 

\begin{figure}
\centering
\includegraphics[width=0.75\columnwidth]{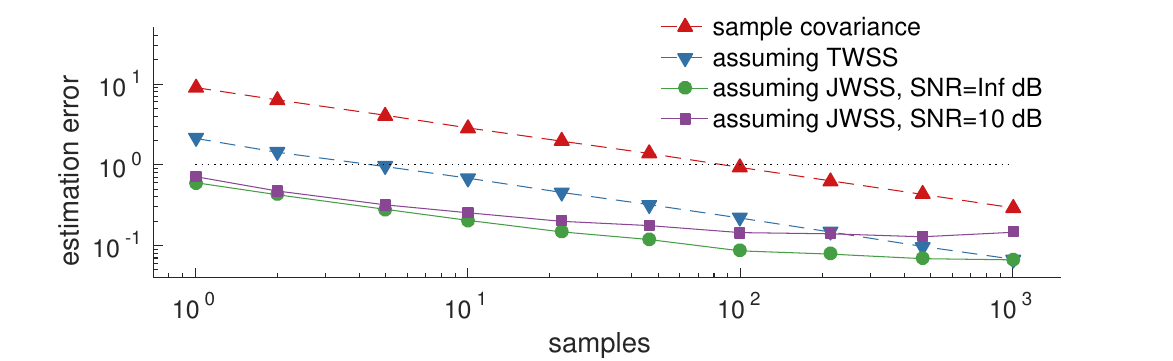}\\
\text{(a)$N = 10,\ T = 10$}\\
\includegraphics[width=0.75\columnwidth]{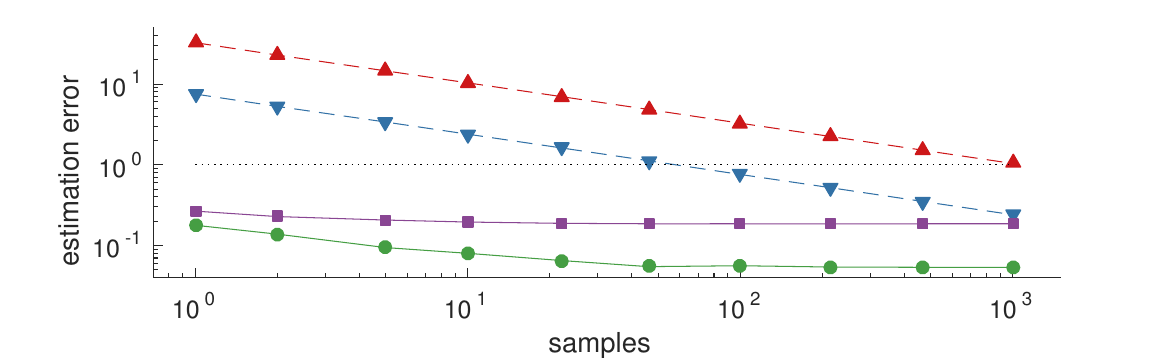}\\
\text{(b) $N = 100,\ T = 10$}\\
\vspace{2mm}

\caption{Estimation error ${\Es{\norm{\ddot{\H} - \H}_F} }/{ \norm{\H}_F}$ as a function of the number of realizations and number of vertices. Even an approximate knowledge of the graph enables us to make good estimates of the covariance (and PSD) from few realizations. The joint stationarity prior becomes especially meaningful when the number of variables ($N,T$) increases. The benefit also holds for a noisy graph (SNR = 10dB). }
\label{fig:covariance_estimation_comparison}
\end{figure}

What is perhaps surprising is that even when the graph (and $\UG$) are known only approximately,  estimating the second order moment of the distribution using the joint stationarity assumption is beneficial. To portray this phenomenon, we also plot the estimation error when using a noisy graph (we corrupted the weighted adjacency matrix by Gaussian noise, resulting in an SNR of 10 dB). Undoubtedly, introducing noise to the graph edges negatively affects estimation by introducing bias. Still, even with noise, the proposed method significantly outperforms purely time-based methods when less than $NT$ realizations are available.  

\subsection{Recovery Performance on Three Datasets}

We apply our methods on three diverse datasets featuring multivariate processes evolving over graphs: (a) a weather dataset depicting the temperature of 32 weather stations over one month, (b) a traffic dataset depicting high resolution daily vehicle flow of 4 weekdays, and (c) SIRS-type epidemics in Europe. Our experiments aim to show that joint stationarity is a useful model, even in datasets which may violate the strict conditions of our definition, and that it can yield a significant improvement in recovery performance, as compared to time- or vertex-based stationarity methods.

\paragraph{Experimental setup}. We split the $K$ realizations of each dataset into a \emph{training set} of size $p_t K$ and a \emph{test set} of size $ (1 - p_t) K$, respectively.  The training set is used to estimate the JPSD. Then, in the first two experiments, we attempt to recover the values of $p_d NT$ variables randomly discarded from the test set. This corresponds to $\A$ being a binary diagonal matrix and $\w = 0$ in Problem~\ref{prob:recovery_problem}, for which the solution is not given by a Wiener filter. In the third experiment, we instead consider a denoising problem with $\A = \I$ and $\w$ being a random Gaussian vector. In each case, we report the RMSE for the recovered signal normalized by the $\ell_2$-norm of the original signal. 
We compare our joint method with the sample and convolutional JPSD estimators to \emph{univariate time/vertex stationarity}~\cite{perraudin2016stationary}. These methods solve the statistical recovery problem under the assumption that signals at stationary in the time/vertex domains, but considering different vertices/timesteps as independent. These methods are known to outperform non-model based methods, such as Tikhonov regularization (ridge regression) and total-variation regularization (lasso) over the time or graph dimensions~\cite{shuman2013emerging,sandryhaila2013discrete}. We also compare to the more involved \emph{MTWSS model}~\cite{bach2004learning} where the values at different vertices are correlated and the covariance is block-circulant of size $NT \times NT$ (see Definition~\ref{def:time-stationarity}). The latter is only shown for the weather dataset as the large number of variables present in the other datasets (e.g., $\approx 10^8$ parameters for the traffic dataset) prohibited computation. We remark that the graph Laplacians we considered did {not} possess eigenvalue multiplicities, meaning that the results obtained using the JWSS-alternate definition are identical to that with JWSS using a sample JPSD estimator---thus, we do not include JWSS-alternate in our comparison. 

\begin{figure}[t!]
\centering
\includegraphics[width=0.75\columnwidth]{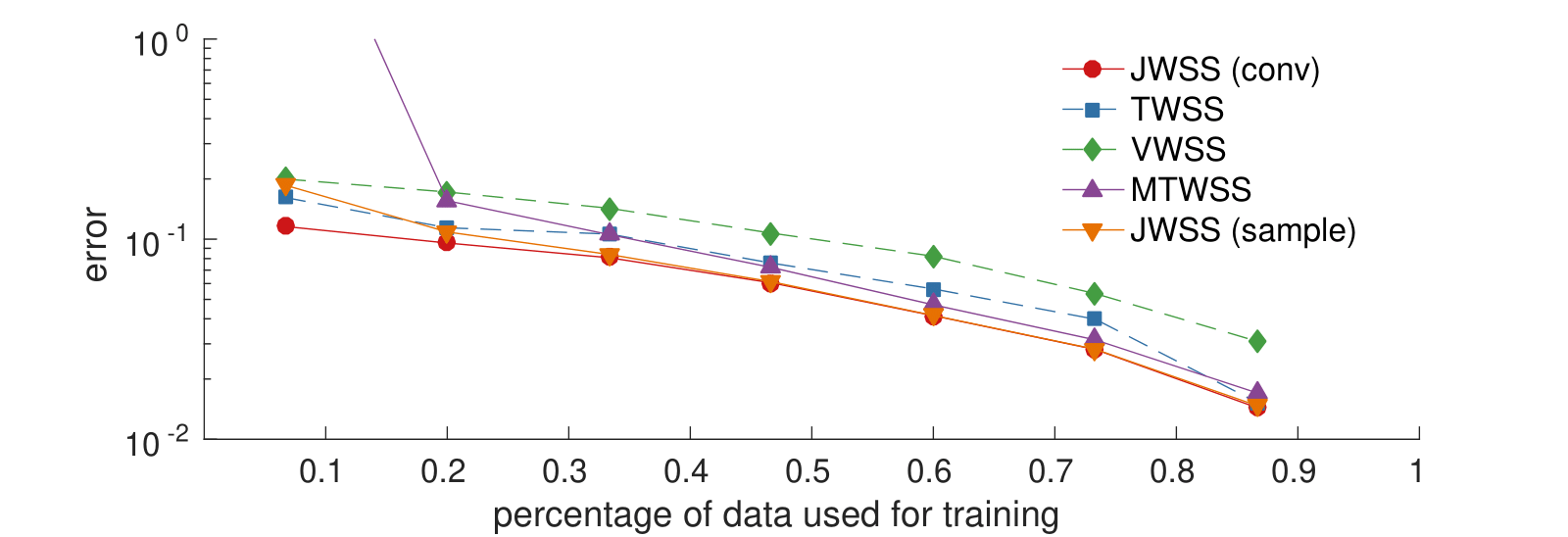} \\
\text{(a) Influence of the training set size ($p_d = 30\%$)}\\
\includegraphics[width=0.75\columnwidth]{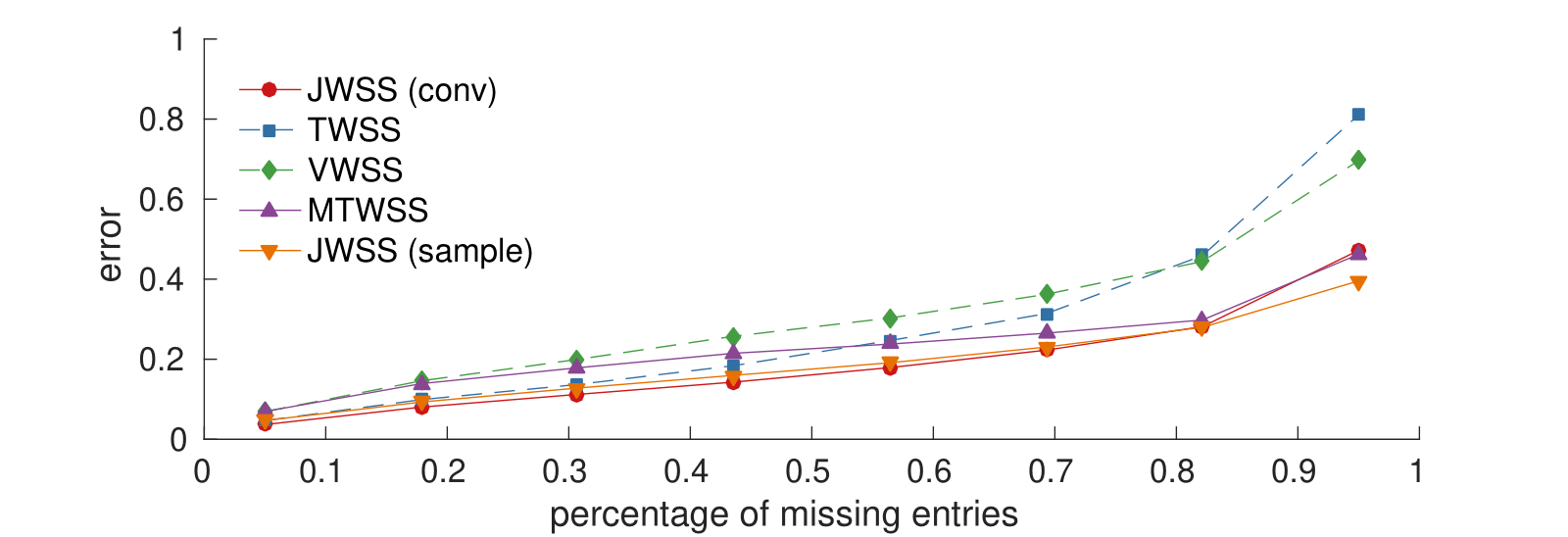}\\
\text{(b) Influence of the percentage of missing values ($p_t = 20\%$)}\\
\vspace{2mm}
\caption{Experiments with weather data. The joint approach becomes especially meaningful when the available data are few.}
\label{fig:result_molene}
\end{figure}

\paragraph{Molene dataset.} The French national meteorological service has published in open access a dataset\footnote{Access to the raw data is possible directly from \url{https://donneespubliques.meteofrance.fr/donnees_libres/Hackathon/RADOMEH.tar.gz}} with hourly weather observations collected during the Month of January 2014 in the region of Brest (France)~\cite{girault2015stationary}. The graph was built from the coordinates of the weather stations by connecting all the neighbors in a given radius with a weight function $\W_G[i_1,i_2] = \mathrm{exp}({-k \, d(i_1,i_2)^2 })$, where $d(i_1,i_2)$ is the Euclidean distance between the stations $i_1$ and $i_2$. Parameter $k$ was adjusted to obtain an average degree around $5$ ($k$ however is not a sensitive parameter). We split the data in $K = 15$ consecutive periods of $T = 48$ hours each. As sole pre-processing, we removed the mean (over time and stations) of the temperature\footnote{Though computing separate means (one for each data chunk) yields slightly better performance, to be consistent with the proposed model we computed a single mean over all training data.}. 
Since $NT$ is here relatively small, we used the empirical JPSD estimator.

We first test the influence of training set size $p_t$, while discarding $p_d = 30\%$ of the test variables. As seen in Figure~\ref{fig:result_molene} (a), due to its large sample complexity the MTWSS approach provides good recovery estimates when the number of realizations is large, approaching that of joint stationarity, but suffers for small training sets (though not shown in the figure, the relative mean error was 9.8 when only $p_t = 10\%$ of the data was used for training). Due to their stricter modeling assumptions, univariate stationarity methods returned relevant estimates when trained from few realizations, but exhibited larger bias. 
The convolutional JPSD estimator can be seen to improve upon the sample estimator when the amount of data used for JPSD estimation is small (less than 20\%). For bigger training sets, the two estimators yield similar accuracy.
Figure~\ref{fig:result_molene} (b) reports the achieved errors for recovery problems with progressively larger percentage $5\% \leq p_d \leq 95\%$ of discarded entries for a training percentage of $p_t = 20\%$. We can observe that the error trends are consistent across all cases.

\begin{figure}[t!]
\centering
\includegraphics[width=0.75\columnwidth]{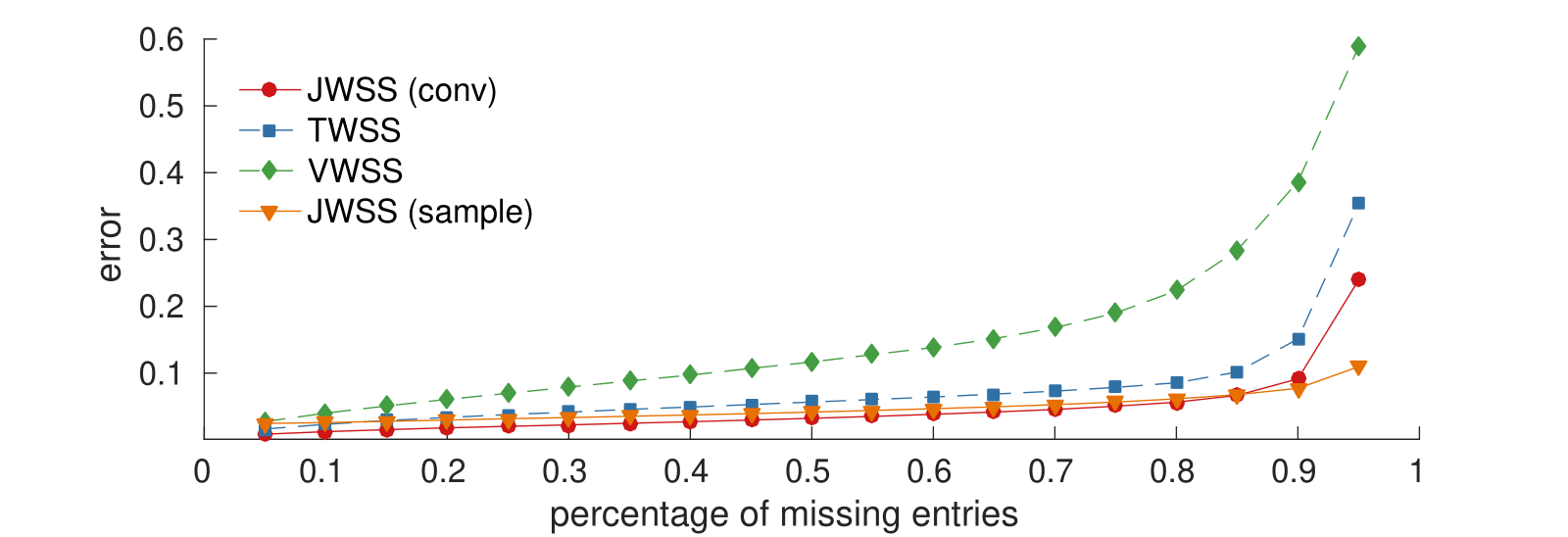}\\
\text{(a) 1 out of 4 days used for training ($p_t = 25\%$)}\\
\includegraphics[width=0.75\columnwidth]{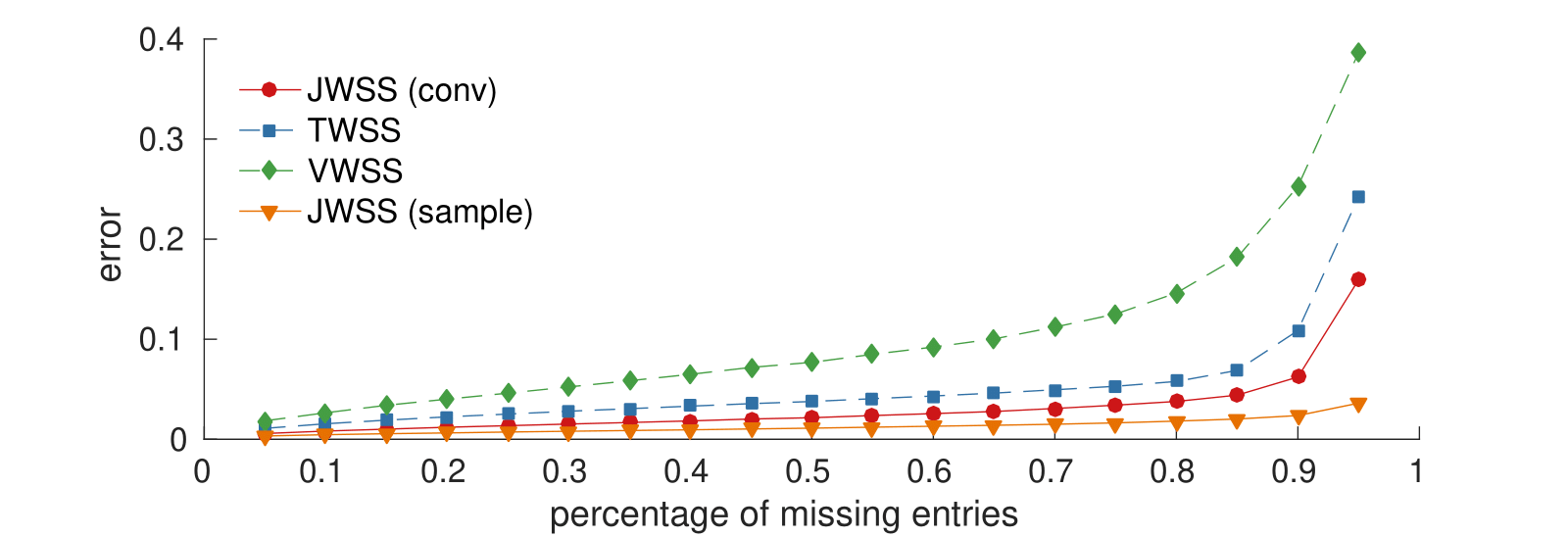}\\
\text{(b) 3 out of 4 days used for training ($p_t = 75\%$)}\\
\vspace{2mm}
\caption{Experiments on Sacramento highway flow. By exploiting both graph and temporal dimensions, the joint approach closely captures the subtle variations in traffic throughout each weekday.}
\label{fig:result_pems}
\end{figure}

\paragraph{Traffic dataset.} The California department of transportation publishes high-resolution traffic flow measurements (number of vehicles per unit interval) from stations deployed in the highways of Sacramento\footnote{The data correspond to the 3rd district of California and can be downloaded from \url{http://pems.dot.ca.gov/}}. We focused on 727 stations over four weekdays in the period 01-06 April 2016. Starting from the road connectivity network obtained by the OpenStreetMap.org, we constructed one time serie for each highway segment by setting the flow over it to be a weighted average of all nearby stations, while abiding to traffic direction. This resulted in a graph of $N = 710$ vertices, and a total of $T = 24 \times 12$ measurements per day for $K = 4$ days. We used the convolutional JPSD estimator with parameters $L = T/2$ and $F = 75$, which were experimentally found to give good performance in the training set.

Figures~\ref{fig:result_pems} (a) and~\ref{fig:result_pems} (b) depict the mean recovery errors when the training sets where 1 ($p_t = $ 25\%) and 3 days ($p_t = $ 75\%) respectively. The strong temporal correlations present in highway traffic were useful in recovering missing values. Considering both the temporal and spatial dimensions of the problem, resulted in accurate estimates, with less than 0.04 error when $p_d=$50\% of the data were removed and the PSD was estimated from one day. As expected, the convolutional estimator is efficient in the case when the training set is small (1 out of 4 days used for training): assuming that the JPSD is smooth helps to reduce estimation variance and computational complexity, but can lead to a slight decrease in accuracy when a large amount of training data is available.

\begin{figure}[t!]
\centering
\includegraphics[width=0.75\columnwidth]{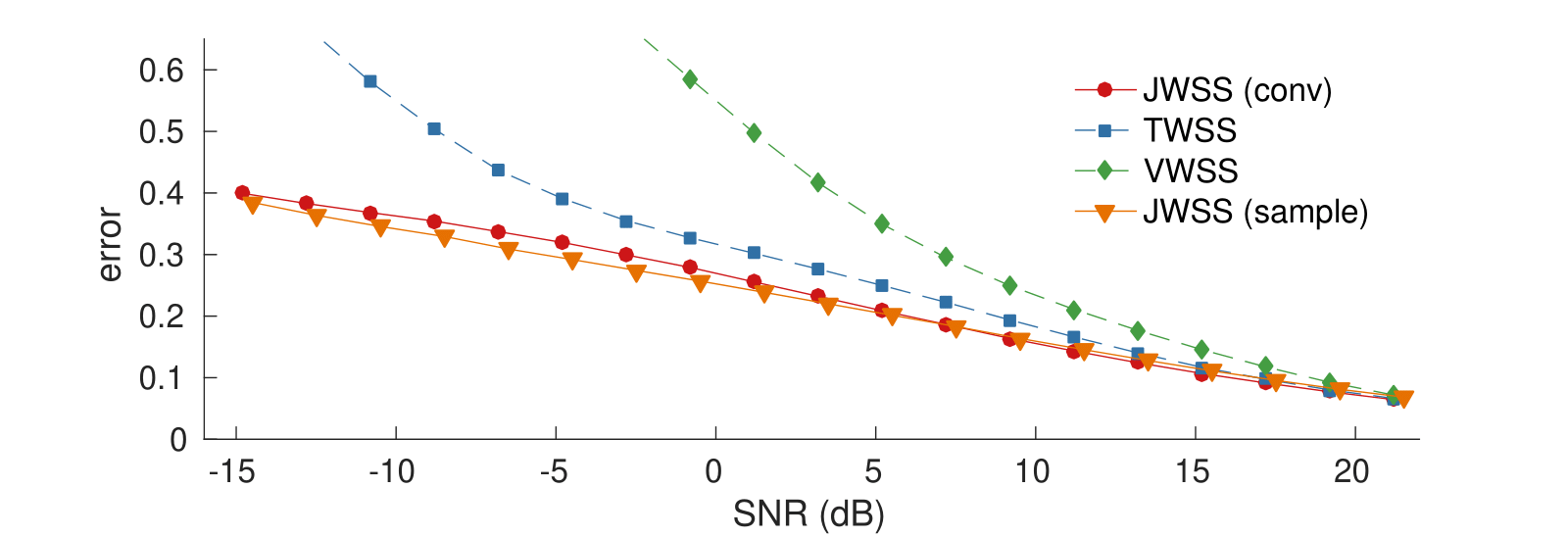}\\
\text{(a) Influence of noise level ($p_t = 50\%$)}\\
\includegraphics[width=0.75\columnwidth]{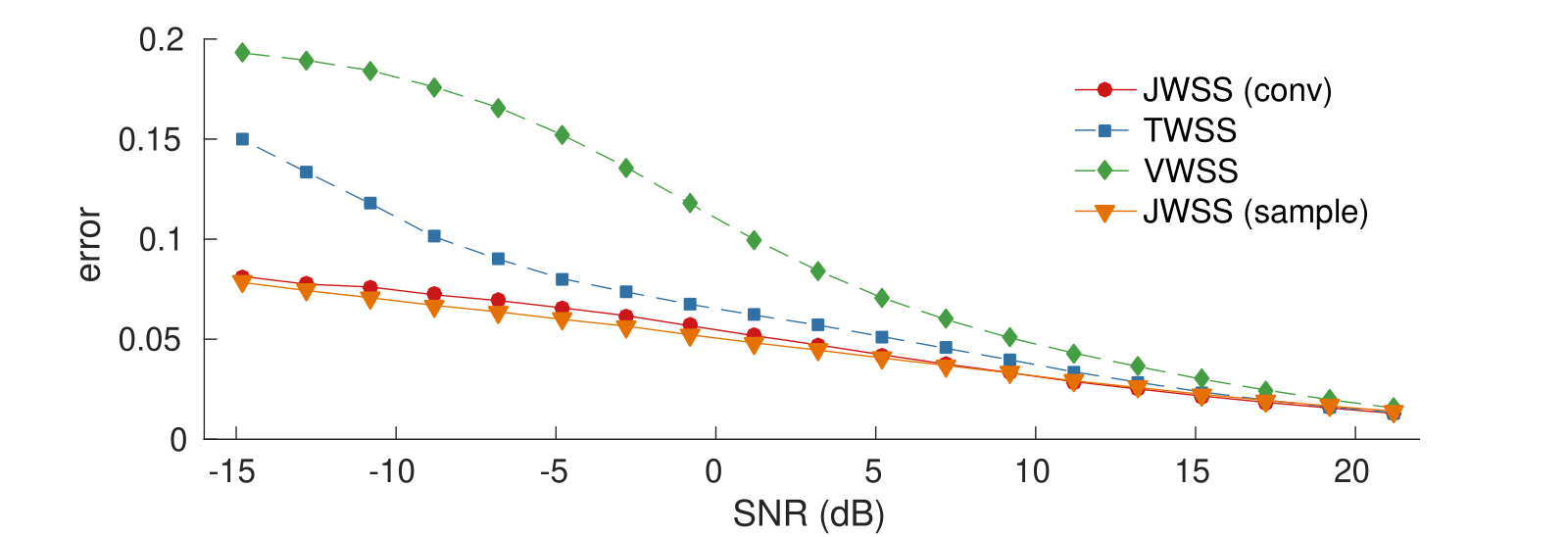}\\
\text{(b) Influence of noise level ($p_t = 90\%$)}\\
\caption{Experiments with the SIRS epidemic model.  }
\label{fig:result_sirs}
\end{figure}

\paragraph{SIRS epidemic.} Our third experiment simulates the spread of an infectious disease over $N = 200$ major cities of Europe, as predicted by the Susceptible-Infected-Recovered-Susceptible (SIRS) model, one of the standard models used to study epidemics. We intend to examine the predictive power of the considered methods when dealing with different realizations of a non-linear and probabilistic process over a graph (the data are fictitious). We parameterized SIRS as follows: length of infection period: 2 days, length of immunity period: 10 days, probability of contagion across neighboring cities per day: 0.005, total period: $T = 180$ days. We generated a total of $K =10$ infections, all having the same starting point. 

In contrast to the previous experiments, here we attempt to recover the data after they have been corrupted with additive Gaussian noise. Figures~\ref{fig:result_sirs} (a) and~\ref{fig:result_sirs} (b) depict the mean recovery error as a function of the input signal-to-noise ratio (SNR), respectively when $p_t = $ 50\% and $p_t = $ 90\% of the data were used for training. As in previous experiments, the joint stationarity attains better recovery. The difference becomes clearer for low SNR, in which case the error is decreased (roughly) by a factor of two w.r.t. the best alternative.

\paragraph{Code.} We remark that our simulations were done using the GSPBOX~\cite{perraudin2014gspbox}, the UNLocBoX~\cite{perraudin2014unlocbox}, and the LTFAT~\cite{ltfatnote030}. The code reproducing our experiments is available at \url{https://lts2.epfl.ch/stationary-time-vertex-signal-processing/}.

\section{Conclusion}

This paper proposed a new definition of wide-sense stationarity appropriate for multivariate processes supported on the vertices of a graph. 

Our model presents two key benefits. 
\emph{First, the estimation and recovery of JWSS processes is efficient, both in terms of estimation variance and computational complexity.} In particular, the JPSD of a JWSS process can be estimated from few observations at a complexity that is roughly linear to the number of graph edges and timesteps. After the PSD has been estimated, the linear MMSE recovery problems of interpolation and denoising can be solved in the same asymptotic complexity. 
\emph{Second, joint stationarity is a volatile model, able to capture non-trivial statistical relations in the temporal and vertex domains.} 
Our experiments suggested that we can model real spatiotemporal processes as jointly stationary without significant loss. Specifically, the JWSS prior was found more expressive than (univariate) TWSS and VWSS priors and improved upon the multivariate time stationarity prior when the dimensionality was large, but the model estimation was based on few observations of the process.

\section{Appendix}

\subsection{Implementation details of the JPSD estimator}
\label{app:fastpsd}

A straightforward implementation requires $\O(N^3)$ operations for computing the eigenbasis of our graph, $\O(N^2 \times KT)$ for performing $KT$ independent GFT, $\O( T\log(T) \times KN )$ for $KN$ independent FFT, and $\O(N^2 T^2)$ for the convolution. 

This section describes how to approximate a convolutional estimate using a number of operations that is linear to $E T$. Before describing the exact algorithm, we note two helpful properties of the estimator. 
First, we can compute $\ddot{h}(\theta)$ by obtaining estimates for each $\X_{(k)}$ independently and then averaging over $k$:   
\begin{align*}
    \dot{h}(\theta) &= \frac{1}{K\, c_g(\theta)} \sum_{k} \sum_{n, \tau} g(\theta - \theta_{n,\tau})^2 \, |\JFT{\X_{(k)}}[n,\tau]|^2
\end{align*}
As we will see in the following, the terms inside the outer sum 
can be approximated efficiently, avoiding the need for an expensive JFT.
In addition, when the convolution window is separable, i.e., $g(\theta) = g_G(\lambda) \cdot g_T(\omega)$, as is assumed in this contribution, the joint convolution can be performed successively (and at any order) in the time and vertex domains
\begin{align}
   \ddot{h}(\theta) 
  & = \sum_{\tau} \frac{g_T(\omega - \omega_{\tau})^2}{c_{g_T}(\omega)} \left( \sum_{n} \frac{g_G(\lambda - \lambda_{n})^2}{c_{g_G}(\lambda)} \, \dot{h}(\theta_{n,\tau}) \right), \nonumber 
\end{align}
where $c_g(\theta) = c_{g_T}(\omega) \cdot c_{g_G}(\lambda)$. Exploiting this property, we treat the implementation of the two convolutions separately and the presented algorithms can be combined in any order. 

\paragraph{Fast time convolution.} This is the textbook case of TPSD estimation, that is solved by the  Welch's method~\cite{welch1967use}. The method entails splitting each time series into equally sized overlapping segments and averaging over segments the squared amplitude of the Fourier coefficients. The procedure is equivalent to an averaging (over time) of the squared coefficients of a Short Time Fourier Transform (STFT), with half-overlapping windows $w_T$ defined such that {$\DFT{w_T(t)} = g_T(\omega)$}~\cite{grochenig2013foundations,feichtinger2012gabor}.
Let $L$ be the support of the autocorrelation, or equivalently the number of frequency bands. We suggest using the iterated sine window 
\begin{align*}
w_T(t) \delequal 
\begin{cases} 
\sin\left(0.5\pi\cos\left(\pi t / L\right)^2\right) &\mbox{if } t\in [-L/2,L/2] \\ 
0 & \mbox{otherwise},
\end{cases}
\end{align*}
as it turns the STFT into a tight operator. In order to get an estimate of $\ddot{h}$ at unknown frequencies, we interpolate between the $L$ known points using splines~\cite{de1978practical}. 

\paragraph{Fast graph convolution.}
Inspired by the technique of~\cite{perraudin2016stationary}, we perform the graph convolution using an approximated graph filtering operation~\cite{susnjara2015accelerated} that scales linearly to the number of graph edges $E$. In particular,
\begin{align}
\sum_{n = 1}^N \frac{g_G(\lambda - \lambda_{n})^2}{c_{g_G}(\lambda)} \, \dot{h}(\theta_{n,\tau}) =  \frac{ \E{\|g_{G}(\lambda\I_N - \LG) \, \x_\tau  \|_2^2} }{c_{g_G}(\lambda)}.
\label{eq:fastgraph_x}
\end{align}
We suggest using the Gaussian window
\begin{align}
  g_{G}(\lambda - \lambda_n) \delequal e^{\displaystyle -{(\lambda - \lambda_n)^2}/{\sigma^2}},
\end{align}
with $\sigma^2 = {2 (F + 1)\lambda_{\text{max}}}/{F^2}$. As we did before, we only compute the above for $F = \O(1)$ different values of $\lambda$ and approximate the rest using splines. As the eigenvalues are not known, we need a stable way to estimate $c_{g_G}(\lambda)$. We obtain an unbiased estimate by filtering $Q = \O(1)$ random Gaussian signals on the graph $\eps \in \Rbb^N \sim \mathcal{N} (0, \I_N)$, such that
\begin{align}
  {c_{g_G}(\lambda)} = \E{ \sum_{q = 1}^Q \|g_{G}(\lambda\I_N - \LG) \eps_{(q)}\|_2^2},
  \label{eq:fastgraph_c}
\end{align}
with variance equal to $2 \sum_{n=1}^N g^4(\lambda - \lambda_n) / Q$. We omit the analysis, as it is similar to that in Theorem~\ref{theo:sample_JPSD}. 
According to our numerical evaluation, the approximation error introduced by the latter estimator and spectral filtering is almost negligible for smooth JPSD.

\paragraph{Complexity.} The computational cost of the above methods is: (a)  $\O( TKF \times E + QF\times E) = \O((TK + Q)EF)$ for the fast graph convolutions. Here, the $TK$ and $Q$ convolutions are performed in order to estimate the quantities at \eqref{eq:fastgraph_x} and \eqref{eq:fastgraph_c} for $F$ different values of $\lambda$. (b) $\O(NK \times T \log(L))$ for the fast time convolution, corresponding to $NK$ STFT. Thus, in total the complexity of the fast convolutional JPSD estimator is $\O(TKFE + QEF + NKT\log(L))$. Furthermore, when $Q,F,K,L$ are constants, the complexity simplifies to $\O(T E)$. We remark that, though asymptotically superior, the fast implementation can be significantly slower when the number of variables is small. Our experiments demonstrate that it should be preferred for $N$ larger than a few thousands (see Figure~\ref{fig:scalability}).

\subsection{Univariate vs multivariate JPSD}
\label{app:univariate-vs-multivariate-PSD}
As discussed in Section~\ref{subsec:relations}, one could potentially pose a VWSS hypothesis on a product graph to define joint stationarity, but the direct effect of such a choice is that the spectral domain becomes 1-dimensional instead of 2-dimensional. To see why this is problematic, in Figure~\ref{fig:comparizon_PSDs} we plot the two different representations of the JPSD for the three datasets featured in our experiments. It can be seen that the 2D representation (corresponding to the JWSS hypothesis) is more structured than its 1D counterpart. More importantly, a JWSS hypothesis leads to a smoother JPSD: this is what our convolutional JPSD estimator employs to decrease the estimation variance.
\begin{figure}[ht!]
\centering
\text{Molene}\\
\includegraphics[width=0.45\columnwidth]{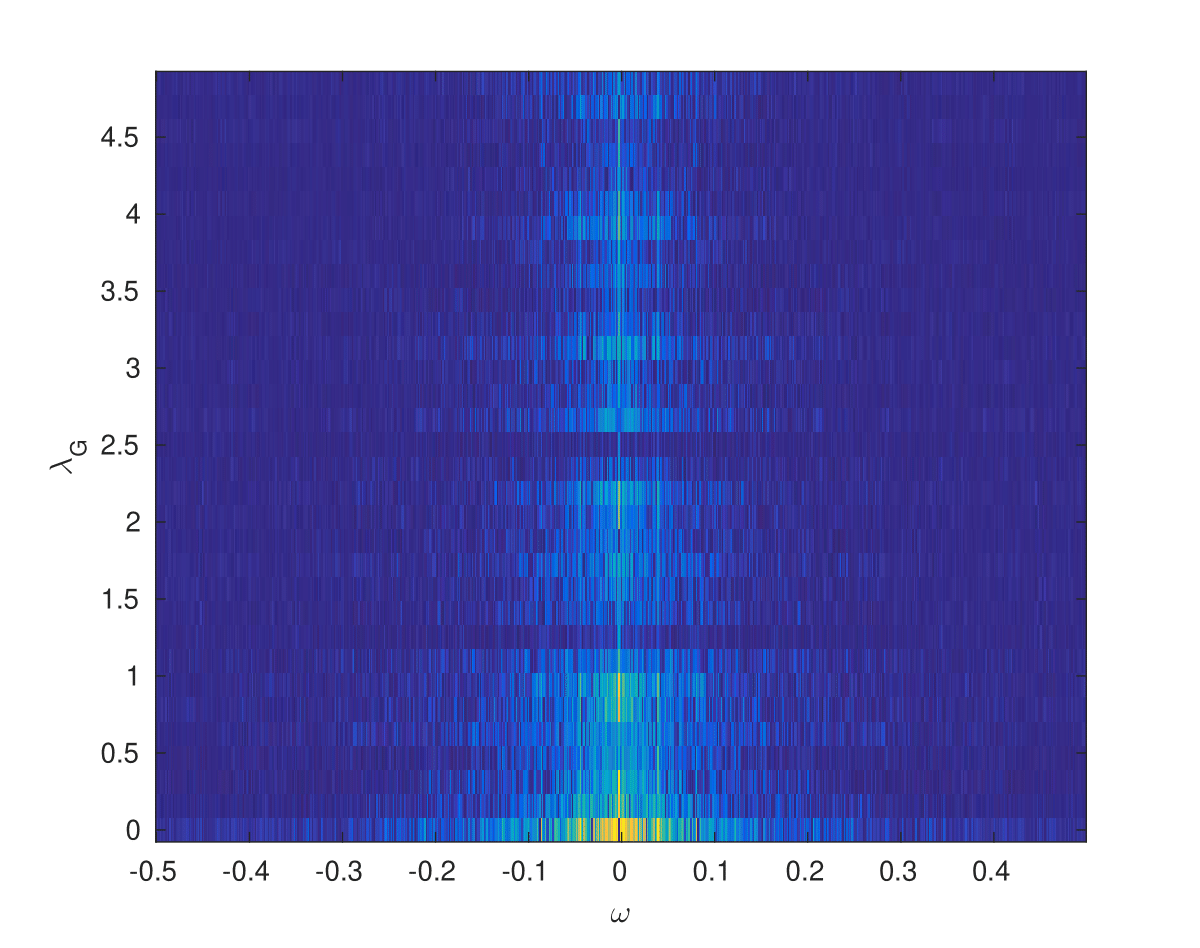}
\includegraphics[width=0.45\columnwidth]{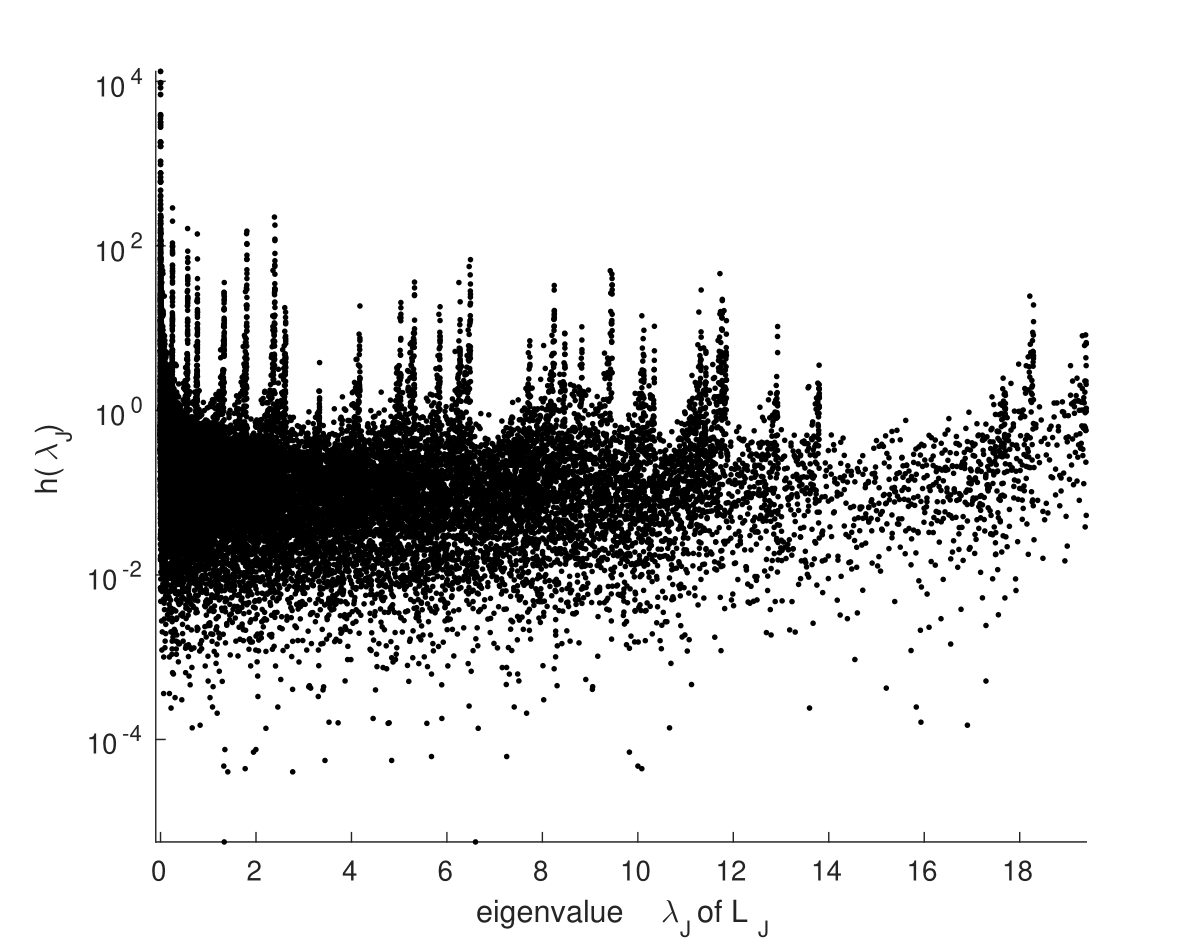}\\
\text{PEMS}\\
\includegraphics[width=0.45\columnwidth]{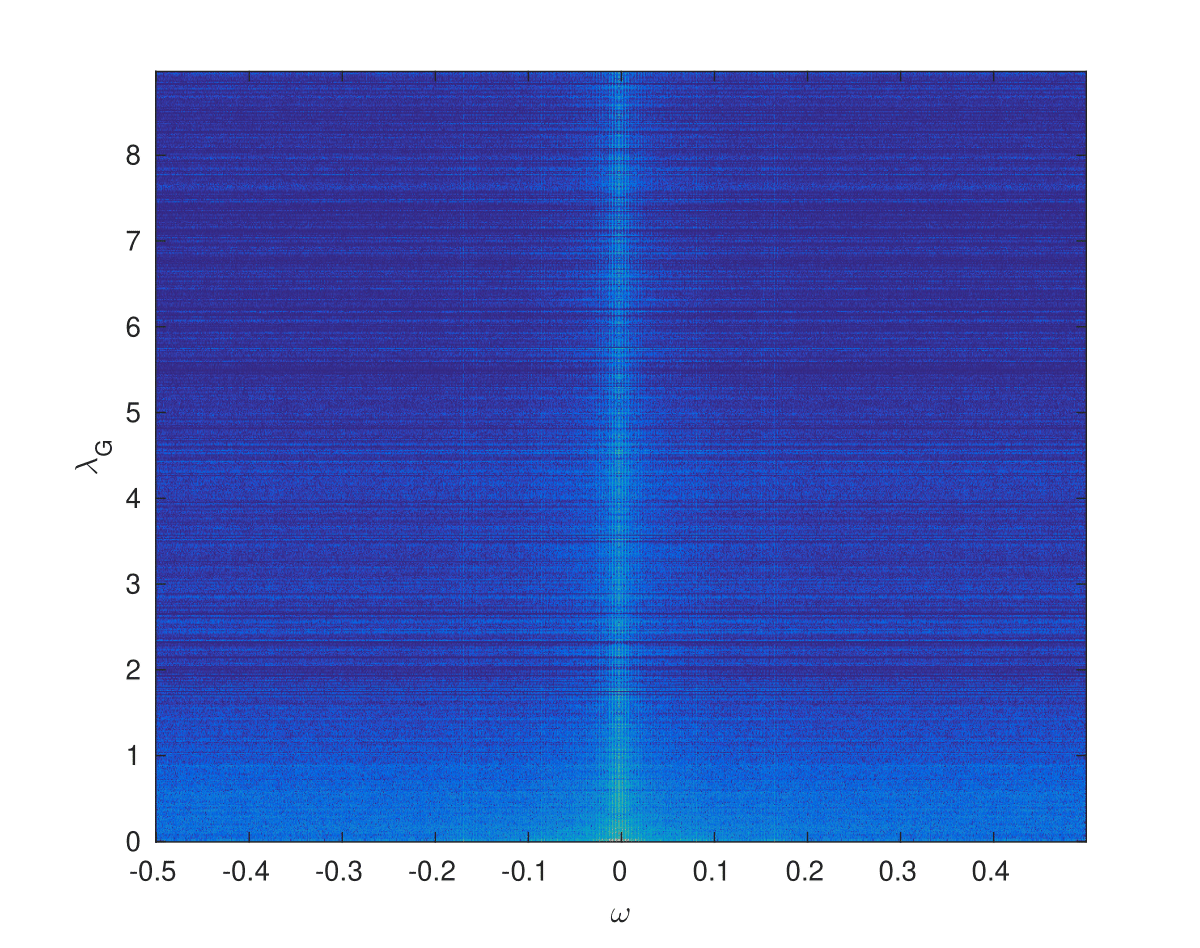}
\includegraphics[width=0.45\columnwidth]{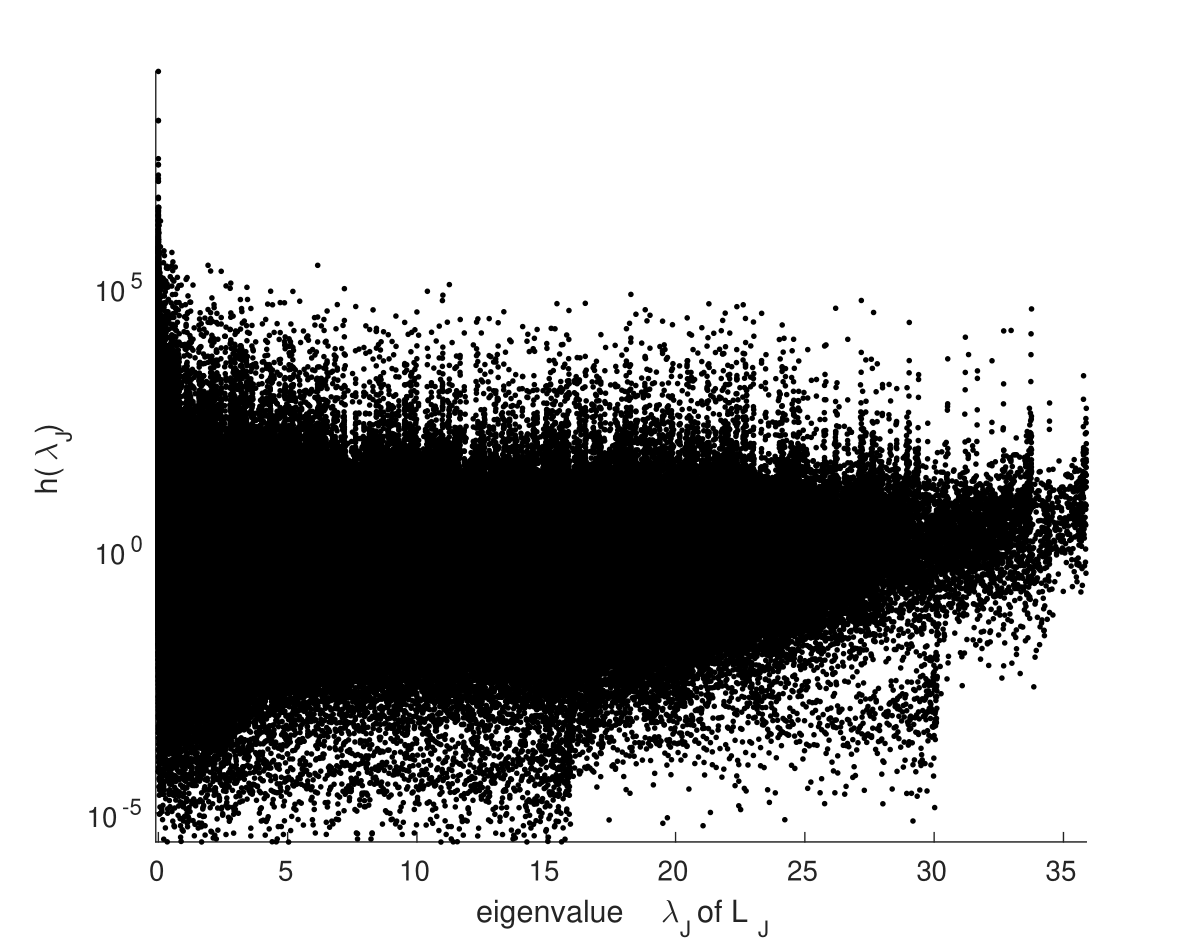}\\
\text{Epidemic}\\
\includegraphics[width=0.45\columnwidth]{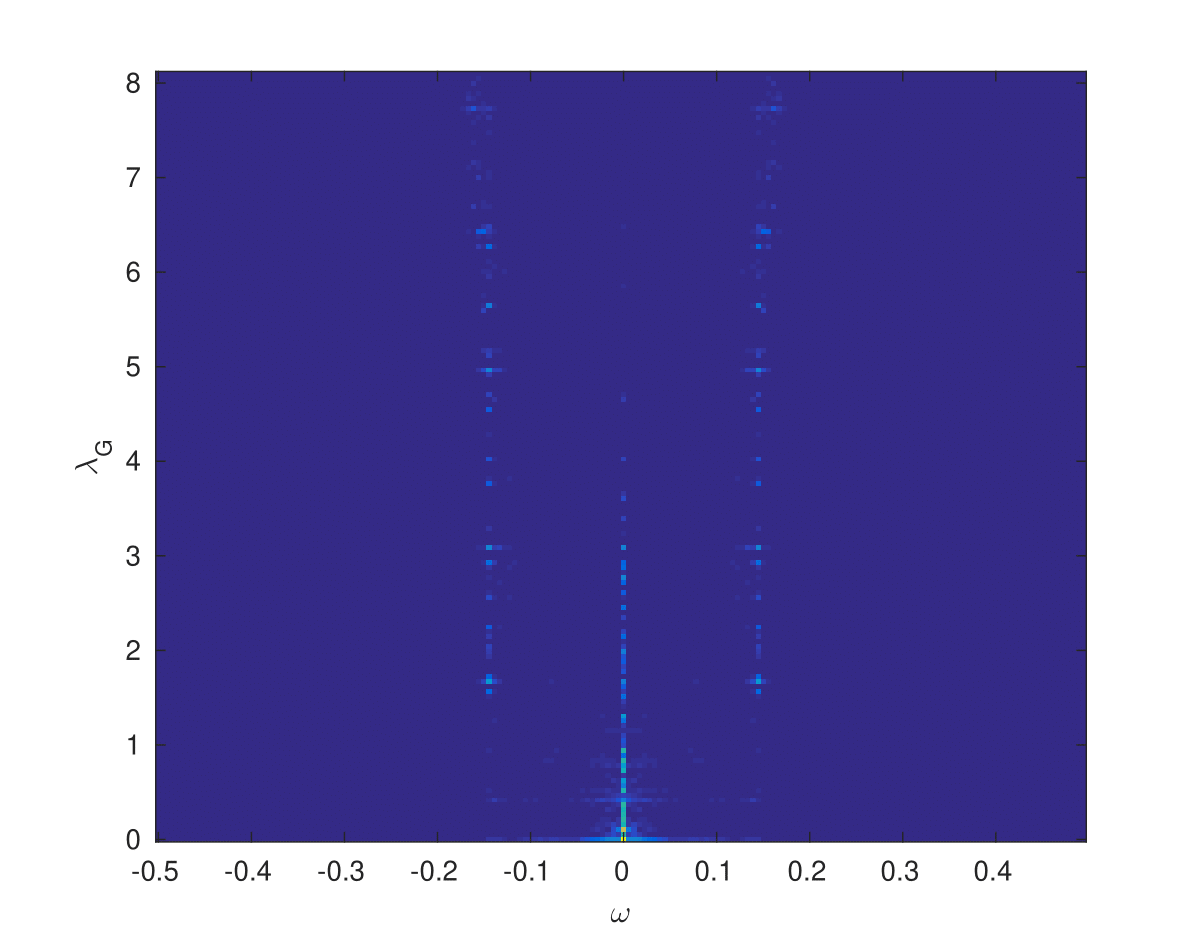}
\includegraphics[width=0.45\columnwidth]{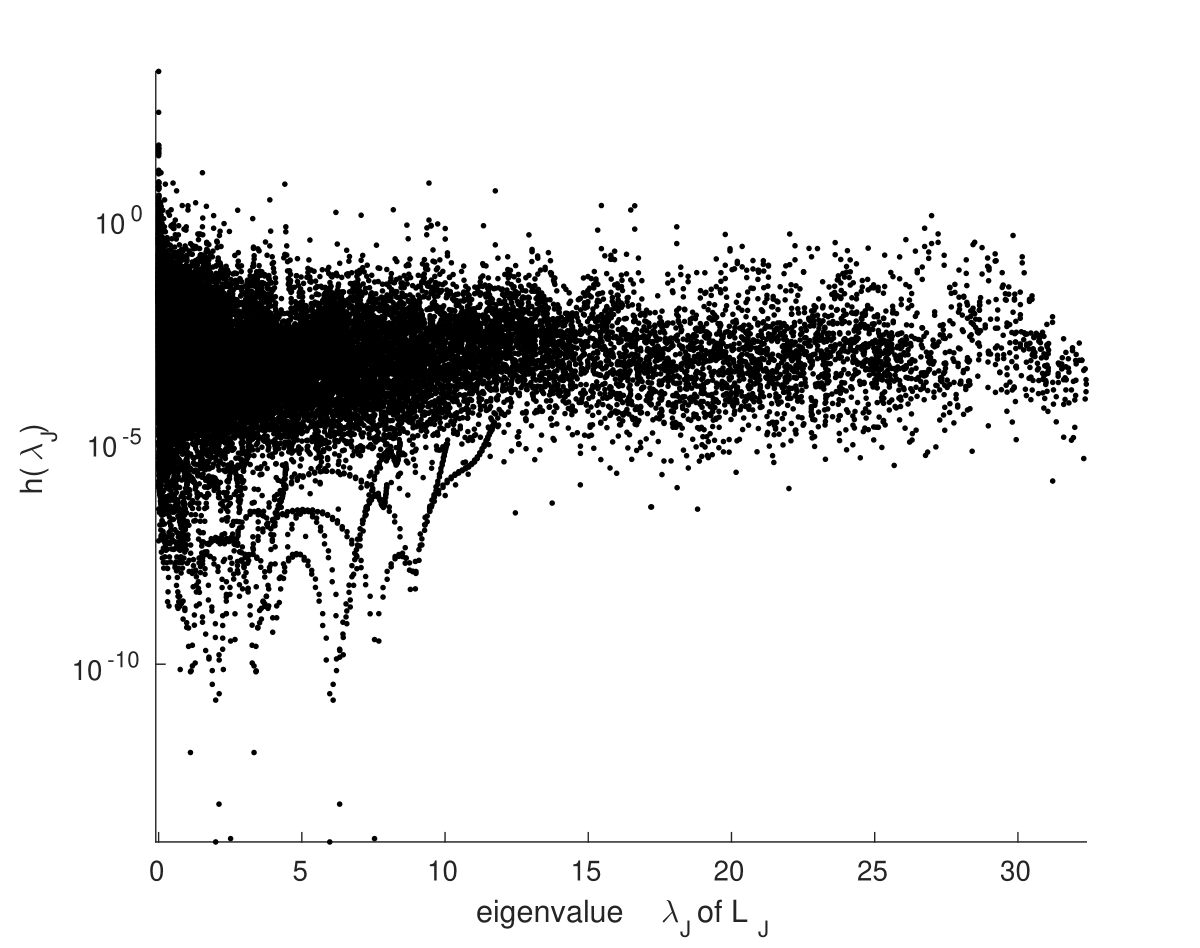}\\
\caption{The JPSD of the three datasets used in our experiments interpreted under a JWSS hypothesis (left) and a VWSS on a product graph hypothesis (right). In both left and right sub-figures, the JPSDs are plotted in logarithmic scale. Interpreting the JPSD as a 2-dimensional function leads to a smoother and more structured representation.
\label{fig:comparizon_PSDs}}
\end{figure}

\subsection{Deferred proofs}

\begin{proof}[Proof of Proposition~\ref{theorem:time-vertex-stationarity}]
In order to simplify the notation in the next proof, we define the unravel function $u_r:\mathbb{Z}^2\rightarrow \mathbb{Z}$ that transforms the double indexes $n,\tau$ of the matrix indexing of $\X$ into its vector index of $u(n,\tau)= (\tau-1)N + n$, i.e., $\X[n,\tau]=\text{vec}(\X)[u(n,\tau)]$.

By construction of the JFT basis, $\hat{\X}[0,0]$ captures the DC-offset of a signal, and condition (a) is equivalent to stating that $\E{\x} = c\1_{NT}$. Moreover, if the graph is connected and (a) holds, at least one of $\E{\hat{\X}[n_1,\tau_1]}$ and $\E{\hat{\X}[n_2,\tau_2]}$ must be zero when ${n_1} \neq {n_2}$ or $\tau_1 \neq {\tau_2}$ and 
\begin{align*}
    \E{\hat{\X}[n_1,\tau_1] \hat{\X}[n_2,\tau_2]}
    &=
\E{\hat{\X}[n_1,\tau_1] \hat{\X}[n_2,\tau_2]} - \E{\hat{\X}[n_1,\tau_1]}\E{\hat{\X}[n_2,\tau_2]} \\
&= (\UJ^* \bSigma \UJ)[u(n_1,\tau_1), u(n_2,\tau_2)]. 
\end{align*}
Therefore, condition (b) is equivalent to stating that $\bSigma = \UJ \b{D} \UJ^\hermitian$ for some diagonal matrix $\b{D}$. In addition, (c) asserts that $\b{D}[u(n,\tau),u(n,\tau)] = h(\lambda_n, \omega_\tau)$ for every $n, \tau$. Thus taken together, (b) and (c) state that $\bSigma = \UJ \b{D} \UJ^\hermitian = \UJ h(\bLambda_G, \bLambda_T) \UJ^\hermitian = h(\LG, \LT)$, which is the second moment condition of a JWSS process.
\end{proof}

\begin{proof}[Proof of Theorem~\ref{theo:time-vertex-def}]
For the first moment, it is straightforward to see that $\E{\X[n,t]}=c$ if and only if both $\E{\X[n,t]} = c_t$ and $\E{\X[n,t]} = c_n$ $\forall n,t$.\\
For the second moment, the covariance matrix of a JWSS process is by definition the linear operator associated to a joint filter $\bSigma = h(\LG,\LT)$. Using~\eqref{eq:joint_filter_detail}, $\bSigma_{t_1,t_2}$ can be written as 
\begin{equation}
\bSigma_{t_1,t_2} = \UG \gamma_\delta(\bLambda) \UG^\hermitian = \gamma_\delta(\LG),
\end{equation}
where $\delta=t_1-t_2+1$ and 
\begin{equation}
    \gamma_\delta(\lambda) = \frac{1}{T} \sum_{\tau=1}^T h(\lambda,\omega_\tau) e^{j \omega_\tau \delta}.
\end{equation}
Hence the process satisfies the (b) statement of Definition~\ref{def:time-stationarity} (TWSS) and~\ref{def:vertex-stationarity} (VWSS).
Conversely if a process is TWSS and VWSS, we have $\bSigma_{t_1,t_2}= \gamma_{t_1,t_2}(\LG) = \gamma_\delta(\LG)$ with the same $\delta$ as before. As a result, using~\eqref{eq:joint_filter_detail}, its covariance matrix can be written as a joint filter $h(\LG,\LT)$, where
\begin{equation}
h(\lambda_n,\omega_\tau) = \sum_{\delta=1}^T \gamma_\delta(\lambda_n) e^{j \omega_\tau \delta},
\end{equation}
and hence also satisfies the property of the second moment of JWSS processes.
\end{proof}

\begin{proof}[Proof of Property~\ref{theo:time-vertex-psd-trans}]
The output of a filter $f(\L_J)$ can be written in vector form as $\y = f(\LJ) $. If the input signal $\x$ is JWSS, we can confirm that the first moment of the filter output is $\E{f(\LJ) \x }= f(\LJ) \E{\x }=f(0,0) \E{\x }$, which remains constant as $\E{\x }$ is constant by hypothesis. The computation of the second moment gives
\begin{align*}
\bSigma_{\y} &= \E{ f(\LJ)\x \left( f(\LJ) \x \right)^* }  - \E{f(\LJ) \x } \E{ (f(\LJ) \x)^* }\\
    &= f(\LJ) \E{ \x \x^* } f(\LJ) - f(\LJ) \E{\x}\E{\x}^* f(\LJ)^*\\ 
    &= f(\LJ) \b{\Sigma}_{\x} f(\LJ)^*  = \UJ \, \left(f^2(\Theta)\, h_\X(\Theta) \right) \, \UJ^*,
\end{align*}
which satisfies the second moment condition of JWSS processes. 
Above, $f^2(\Theta)$ is a diagonal $NT\times NT$ matrix, whose diagonal is obtained by applying the bivariate function $f^2(\cdot, \cdot)$ on $[\lambda_n, \omega_\tau]$ for all $n, \tau$ ($f$ can be interpreted as the frequency response of a joint filter). Matrix $h_\X(\Theta)$ is similarly defined.
\end{proof}

\vspace{-3mm}
\begin{lemma}
If function $h(\theta)$ is $\epsilon$-Lipschitz, then the bias is bounded by
\begin{align*}
\left|\E{\ddot{h}(\theta) - h(\theta)} \right|
\leq \frac{\epsilon }{ c_g(\theta) }\hspace{-1mm} \sum_{n=1,\tau=1}^{T,N}\hspace{-3mm}g(\theta - \theta_{n,\tau})^{2} \norm{\theta - \theta_{n,\tau}}_{2}.
\end{align*}
\label{lemma:convolutional_bias}
\end{lemma}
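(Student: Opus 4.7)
The plan is to carry out a direct linearity-plus-triangle-inequality argument, using the unbiasedness of the sample JPSD estimator established in Theorem~\ref{theo:sample_JPSD} together with the Lipschitz hypothesis on $h$. There is no real obstacle here; the statement essentially says that a weighted average of values of a Lipschitz function stays close to the function value at the center of the window.

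First, I would take expectation inside the defining sum of $\ddot{h}(\theta)$ from~\eqref{eq:JPSD-est}. Since $c_g(\theta)$ and the window weights $g(\theta-\theta_{n,\tau})^2$ are deterministic, linearity of expectation and the unbiasedness $\E{\dot{h}(\theta_{n,\tau})} = h(\theta_{n,\tau})$ from Theorem~\ref{theo:sample_JPSD}(a) give
\begin{align*}
\E{\ddot{h}(\theta)} = \frac{1}{c_g(\theta)} \sum_{n,\tau} g(\theta-\theta_{n,\tau})^2 \, h(\theta_{n,\tau}).
\end{align*}

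Next, I would rewrite $h(\theta)$ in a compatible form by exploiting the definition $c_g(\theta) = \sum_{n,\tau} g(\theta-\theta_{n,\tau})^2$, so that
\begin{align*}
h(\theta) = \frac{1}{c_g(\theta)} \sum_{n,\tau} g(\theta-\theta_{n,\tau})^2 \, h(\theta).
\end{align*}
Subtracting the two expressions and applying the triangle inequality (noting that $g(\theta-\theta_{n,\tau})^2 \geq 0$ and $c_g(\theta) > 0$) yields
\begin{align*}
\left|\E{\ddot{h}(\theta) - h(\theta)}\right| \leq \frac{1}{c_g(\theta)} \sum_{n,\tau} g(\theta-\theta_{n,\tau})^2 \, \bigl|h(\theta_{n,\tau}) - h(\theta)\bigr|.
\end{align*}

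Finally, I would invoke the $\epsilon$-Lipschitz assumption on $h$ pointwise to replace $|h(\theta_{n,\tau}) - h(\theta)|$ by $\epsilon \, \|\theta - \theta_{n,\tau}\|_2$, producing exactly the claimed bound. The argument is essentially a single computation; the only subtlety is keeping the normalization $c_g(\theta)$ straight so that one is comparing the window-weighted average of $h(\theta_{n,\tau})$ with the same weighted average of the constant value $h(\theta)$, which is what allows the Lipschitz estimate to be applied term by term.
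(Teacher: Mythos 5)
Your proposal is correct and follows essentially the same route as the paper's proof: both use the unbiasedness of the sample estimator, rewrite $h(\theta)$ as the window-weighted average of the constant $h(\theta)$ via the normalization $c_g(\theta) = \sum_{n,\tau} g(\theta-\theta_{n,\tau})^2$ (the paper phrases this as a vanishing term $A\,h(\theta)$ with $A=0$), and then apply the Lipschitz bound term by term. No gaps.
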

\begin{proof}
Since $h(\theta)$ is $\epsilon$ Lipschitz, we have $| h(\theta) - h(\theta_{n,\tau}) | \leq \epsilon \norm{\theta - \theta_{n,\tau}}_2 $. Hence, we write
\begin{align*}
  \left|\E{\ddot{h}(\theta) - h(\theta)} \right| 
  & =  \left| \left( \sum_{n,\tau=1}^{NT} \frac{g(\theta- \theta_{n,\tau})^{2}}{c_g(\theta)}  h(\theta_{n,\tau}) \right) - h(\theta)\right| \\
  &\hspace{-27mm}=  \left| \sum_{n,\tau=1}^{NT} \frac{g(\theta- \theta_{n,\tau})^{2}}{c_g(\theta)}  \left( h(\theta_{n,\tau}) - h(\theta) \right)\right| \hspace{-0mm}\leq \frac{\epsilon}{c_g(\theta)} \sum_{n,\tau=1}^{NT} g(\theta- \theta_{n,\tau})^{2} \norm{\theta - \theta_{n,\tau}}_2,
\end{align*}
where the second equality stems from $\sum_{n,\tau}  g^{2}(\theta- \theta_{n,\tau}) = c_g(\theta)$. 
\end{proof}

\vspace{-3mm}
\begin{lemma}
If $\X$ is a JWSS process such that the entries of $\hat{\X}$ are independent random variables, the convolutional JPSD estimate at $\theta$ has variance
\begin{align}
  \var{\ddot{h}(\theta)} = \sum_{n,\tau} \frac{g(\theta- \theta_{n,\tau})^{4}}{c_g(\theta)^2} \, \var{\dot{h}(\theta_{n,\tau})},
\end{align}
where $\var{\dot{h}(\theta_{n,\tau})}$ is the variance of the sample JPSD estimator at $\theta_{n,\tau}$.
\label{lemma:convolutional_variance}
\end{lemma}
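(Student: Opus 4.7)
The plan is to exploit the fact that $\ddot{h}(\theta)$ is, by definition~\eqref{eq:JPSD-est}, a deterministic linear combination of the sample estimates $\{\dot{h}(\theta_{n,\tau})\}_{n,\tau}$ with coefficients $g(\theta-\theta_{n,\tau})^2/c_g(\theta)$. Since the variance of a linear combination of random variables is just a quadratic form in the covariances, the entire argument reduces to (i) checking that the sample estimates at distinct joint frequencies are uncorrelated and (ii) expanding the resulting diagonal quadratic form.

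First, I would recall from the proof of Theorem~\ref{theo:sample_JPSD} that
\[
\dot{h}(\theta_{n,\tau}) = \frac{1}{K}\sum_{k=1}^K \bigl|\hat{\X}_{(k)}[n,\tau]\bigr|^2,
\]
so each $\dot{h}(\theta_{n,\tau})$ is a measurable function of the coefficients $\{\hat{\X}_{(k)}[n,\tau]\}_{k=1}^K$ at a single joint frequency $(n,\tau)$. Different realizations $\X_{(k)}$ are independent by assumption, and the lemma's hypothesis states that the entries of $\hat{\X}$ are independent across $(n,\tau)$. Consequently the family $\{\dot{h}(\theta_{n,\tau})\}_{n,\tau}$ is a family of independent random variables, since each depends on a disjoint subset of independent quantities.

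Second, I would apply Bienaymé's identity to the linear combination defining $\ddot{h}(\theta)$. Writing $a_{n,\tau} = g(\theta-\theta_{n,\tau})^2/c_g(\theta)$, we have $\ddot{h}(\theta) = \sum_{n,\tau} a_{n,\tau}\,\dot{h}(\theta_{n,\tau})$, and the independence established above gives
\[
\var{\ddot{h}(\theta)} = \sum_{n,\tau} a_{n,\tau}^{\,2}\,\var{\dot{h}(\theta_{n,\tau})} = \sum_{n,\tau} \frac{g(\theta-\theta_{n,\tau})^4}{c_g(\theta)^2}\,\var{\dot{h}(\theta_{n,\tau})},
\]
which is exactly the claimed identity. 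The only substantive step is the independence check in the previous paragraph; the rest is a one-line application of the standard variance-of-a-sum identity. I do not anticipate any technical obstacle, since the normalization $c_g(\theta)$ is a deterministic constant depending only on $\theta$ and on the graph eigenvalues, and hence factors cleanly out of the variance computation.
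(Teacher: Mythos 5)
Your proof is correct and takes essentially the same route as the paper's: both express $\ddot{h}(\theta)$ as a deterministic linear combination of the per-frequency sample estimates $\dot{h}(\theta_{n,\tau})$, use the independence hypothesis on the entries of $\hat{\X}$ (together with the independence of realizations) to conclude the summands are independent, and then sum the variances. The paper merely phrases this by first centering via normalized variables $z_{n,\tau} = \sum_k |\hat{\b{E}}_{(k)}[n,\tau]|^2/K - 1$, which is cosmetically different from, but mathematically identical to, your direct application of Bienaym\'e's identity.
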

\begin{proof} 
Set $$\alpha_{n,\tau}= g(\theta- \theta_{n,\tau})^{2} h(\theta_{n,\tau})/c_g(\theta)$$ and $\hat{\b{E}}_{(k)} = \mat{\hat{\eps}_{(k)}} = \mat{h(\bLambda_G, \bOmega)^{+\sfrac{1}{2}} \hat{\x}_{(k)}}$, where $+$ denotes the pseudo-inverse, $\hat{\eps}_{(k)}$ is white, and $ \mat{\cdot} $ is the matricization operator. The centered random variable
\begin{align*}
  \ddot{h}(\theta) - \E{\ddot{h}(\theta)}  &= \sum_{n,\tau} \frac{g(\theta- \theta_{n,\tau})^{2}}{c_g(\theta)} ( \dot{h}(\theta_{n,\tau}) - h(\theta_{n,\tau})) \\
  &\hspace{-0mm}=  \sum_{n,\tau} \alpha_{n,\tau} \left(\sum_{k} \frac{\hat{\b{E}}_{(k)}[n,\tau] \hat{\b{E}}_{(k)}[n,\tau]^\hermitian }{K} - 1\right) = \sum_{n,\tau} \alpha_{n,\tau} \, z_{n,\tau}
\end{align*}
is a weighted sum of centered, identically distributed random variables $z_{n,\tau}$. Moreover, when the elements of $\hat{\b{E}}_{(k)}$ are independent, so are the variables $z_{n,\tau}$. 
It follows that
\begin{align*}
  \var{\ddot{h}(\theta)} &= \sum_{n,\tau} \alpha_{n,\tau}^2 \, \var{z_{n,\tau}^2} = \sum_{n,\tau} \frac{g(\theta- \theta_{n,\tau})^{4}}{c_g(\theta)^2} \, \var{\dot{h}(\theta_{n,\tau})},  
\end{align*}
which matches our claim.
\end{proof}

\section*{Declarations}

\subsection*{Availability of Data and Materials}
The code to reproduce the results is available at \url{https://lts2.epfl.ch/stationary-time-vertex-signal-processing/}.
\begin{itemize}
  \item Access to the raw Molene dataset is possible directly from \url{https://donneespubliques.meteofrance.fr/donnees_libres/Hackathon/RADOMEH.tar.gz}
  \item The traffic data corresponding to the 3rd district of California and can be downloaded from \url{http://pems.dot.ca.gov/}.
  \item The epidemic dataset is synthetically generated from a SIR model. The network used for the model can be downloaded from \url{https://www.visualizing.org/global-flights-network/}.
\end{itemize}

\subsection*{Funding}
This work has been supported by the Swiss National Science Foundation research project \textit{Towards Signal Processing on Graphs} (grant number: 2000\_21/154350/1).

\subsection*{Author contributions}
The two authors contributed equally both for the experiments and for the writing of the paper.

\subsection*{Acknowledgements}
We thank Francesco Grassi for his help with the code.

\section*{List of abbreviations}
\begin{longtable}{ll}
DFT & Discrete Fourier Transform \\
GFT & Graph Fourier Transform \\
JFT & Joint Fourier Transform \\
PSD & Power Spectral Density \\
TPSD & Time Power Spectral Density \\
VPSD & Vertex Power Spectral Density \\
JPSD & Joint Power Spectral Density \\
WSS & Wide-Sense Stationarity\\
TWSS & Time Wide-Sense Stationarity\\
VWSS & Vertex Wide-Sense Stationarity\\
JWSS & Jointly Wide-Sense Stationary \\
MTWSS & Multivariate Time Wide-Sense Stationary \\
MVWSS & Multivariate Vertex Wide-Sense Stationary \\
AIC & Akaike information criterion\\
\end{longtable}

\bibliographystyle{ieeetr}
\bibliography{biblio}

\end{document}